\documentclass[letterpaper]{article} %
\usepackage{aaai2026}  %
\usepackage{times}  %
\usepackage{helvet}  %
\usepackage{courier}  %
\usepackage[hyphens]{url}  %
\usepackage{graphicx} %
\usepackage{natbib}  %
\usepackage{caption} %
\usepackage{algorithm}
\usepackage{algorithmic}
\usepackage{amsmath}
\usepackage{amsthm}
\usepackage{amssymb}
\newtheorem{theorem}{Theorem}
\newtheorem{lemma}{Lemma}
\newtheorem{proposition}{Proposition}
\newtheorem{corollary}{Corollary}

\usepackage{xspace}

\def\TOOLNAME{FONDANT\xspace}
\title{\TOOLNAME: Strong and Best-Effort Planning via Antichains}

\author{
Benjamin Aminof\textsuperscript{\rm 2}, 
Tuan Khai Nguyen\textsuperscript{\rm 1},   
Sasha Rubin\textsuperscript{\rm 1}
}

\affiliations{
    \textsuperscript{\rm 1}School of Computer Science, The University of Sydney, Australia\\
    \textsuperscript{\rm 2}Technical University of Vienna, Austria
}
    \floatname{algorithm}{Pseudocode}

\newtheorem{remark}{Remark}
\newcommand\whentime[1]{}
\newcommand\sr[1]{}

\newcommand\true{\mathbf{true}}
\newcommand\false{\mathbf{false}}
\newcommand\up[1]{\textit{UP}{(#1)}}
\newcommand\Min[1]{\textit{MIN}{(#1)}}
\newcommand\last{\textit{last}}
\newcommand\witstr[1]{\mathit{Wit}_\mathit{T}(#1)}
\newcommand\witwk[1]{\mathit{Wit}_\mathit{W}(#1)}
\newcommand\cand[1]{\mathit{Cand}(#1)}
\newcommand\pret[1]{\mathit{PRE}_\forall(#1)}

\newcommand\prew[1]{\mathit{PRE}_\exists(#1)}
\newcommand\rank[1]{\mathit{Rank}(#1)}
\newcommand\rankr[1]{\mathit{Rank}_{\mathcal{R}}(#1)}

\begin{document}

\maketitle

\begin{abstract}

A classical solution concept in fully observable nondeterministic (FOND) planning, is the strong policy (aka winning strategy in the closely related area of reactive synthesis), i.e., such a policy ensures that the goal is reached in an adversarial environment. When strong policies are not available or there is no evidence that the environment is adversarial, one can resort to best-effort policies, which always exist, and which follow the classic decision-theoretic principle that an agent should not use a dominated strategy. A typical positional best-effort policy works as follows: from every state, it follows a strong policy if one exists from that state (such states are called ``strong-winning''), else a weak policy if one exists from that state (``weak-winning''), and else is unconstrained (``losing''). In this work, we introduce a sound and complete planner for both best-effort planning and strong planning. The algorithm that underpins the planner is quite simple: it represents certain sets of states, such as the winning regions, by their $\subseteq$-minimal elements. We represent positional policies by a set of instructions of the form $\langle x,a,r\rangle$ where $x$ is a state, $a$ is an action, and $r$ is a rank; the induced policy, from a given state $s$, finds an instruction $\langle x,a,r\rangle$ with smallest rank $r$ amongst those for which $x \subseteq s$, and does action $a$.
The algorithm returns uniform policies, i.e., it returns a  policy $\pi_t$ that is a strong solution starting in every strong-winning state, and it returns a policy $\pi_w$ that is a weak solution starting in every weak-winning state, and it provides a certificate for the set of losing states.  We implemented the algorithm with some simple optimizations (calling it \TOOLNAME), and evaluated it on a benchmark set consisting of the instances that were used in the evaluation of leading strong planners PR2 and FOND-SAT, and the best-effort planner BeSyftP. On coverage, our implementation is at least as good on all domains, and outperforms on some domains; and on wall time, it is slower on small and medium-sized instances, and outperforms on larger instances. 
\end{abstract}

\section{Introduction}
In fully observable nondeterministic (FOND) planning,  an agent's actions may have several possible effects. A classic solution concept for such models is the \emph{strong} policy, which (if it exists) guarantees reaching the goal against every outcome. What should an agent do when there is no such policy? Rather than assuming the environment is co-operative (which would mean that \emph{weak} policies are relevant), or fair (which would mean that \emph{strong-cyclic} or \emph{stochastic best-effort} policies are relevant), another line of work (that makes no such assumptions) has proposed and studied \emph{best-effort} solutions~\cite{berwanger07,DBLP:conf/mfcs/Faella09, DBLP:conf/csl/BrenguierRS14,aminof21,DBLP:conf/icra/MuvvalaAL22,syft23}. 

Best-effort policies follow the decision-theoretic principle of not using a dominated policy. That is, one should not use a policy for which there is a policy that performs just as well against all "environments" (functions that resolve the nondeterminism at every history), and strictly better in some. Most importantly, unlike strong or strong-cyclic solutions, such solutions always exist. In fact, it turns out (cf.~\cite{aminof21}) that there is always a positional best-effort policy $\pi_{be}$ that is induced by two positional policies: a uniform strong policy $\pi_t$ (i.e., one that is strong from every state in the set $T$ of states starting from which there is a strong policy), and a uniform weak policy $\pi_w$ (i.e., one that is weak from every state in the set $W$ of states starting from which there is a weak policy). Indeed, the policy is very simple to define from these uniform policies:
\[
\pi_{be}(s) := 
\begin{cases}
    \pi_{t}(s) &\text{ if } s \in T\\
    \pi_{w}(s) &\text{ if } s \in W \setminus T
\end{cases}
\]

We introduce \TOOLNAME, a planner for strong and best-effort FOND planning that relies on the observation that winning regions are upward-closed, and so can be represented by their set of minimal elements -- called antichains. The planner grows an antichain of states, as well as one \emph{instruction} per state in the antichain. An instruction is a triple 
$\langle x,a,r\rangle$ 
consisting of a set $x$ of atoms, an action $a$ that can be executed from every state $s$ with $x \subseteq s$, and a rank $r$. The produced set $\mathcal{R}$ of instructions induced a policy as follows: given a state $s$, find an instruction $\langle x,a,r\rangle$ with smallest rank amongst those for which $x \subseteq s$, and do action $a$. 

Strong planning and best-effort planning in FOND are EXPTIME-complete~\cite{Rintanen, syft23}. These bounds are tight, so the problem is to achieve the best-possible speed under the these bounds. We implemented the algorithm, and incorporated some classic optimizations, i.e., mutex groups, $h^2$ pruning, and $h^{\max}$-style heuristic selection.

For strong planning, we evaluated our tool (called \TOOLNAME) on a set of 222 instances from eight strong-planning domains that were used in the evaluation of leading and influential strong-planners, i.e., PR2~\cite{pr2} and FOND-SAT~\cite{fondsat18}. For best-effort planning, we evaluated our tool on a set of 47 instances from the public repository of the best-effort synthesizer BeSyftP.
\cite{syft23}.

\subsection{Related Work}
Classical solution concepts for FOND planning include strong policies, weak policies, and strong-cyclic policies. The winning regions for such solutions (i.e., the set of states from which there is such a solution), have fixpoint characterizations~\cite{cimatti03}.

An early planner for such solution concepts, MBP, was built on top of the symbolic model checker NuSMV~\cite{cimatti03}.

Some later planners did not aim to compute the winning regions. Methods for strong FOND have since been based  on explicit AND/OR and regression search~\cite{ramirez14}, on classical-planning reformulations solved by repeated replanning~\cite{prp12,pr2}, and on SAT-based controller synthesis \cite{fondsat18}. The runtime of the latest replanning planner dominates in practice~\cite{pr2}.

PRP leverages the concept of grouping states into families for which the same action can be applied~\cite{prp12}, and PR2 represents policies by controller DAGs whose nodes are
partial state-action pairs~\cite{pr2}. Similarly, our representation of policies also groups states, but the action to be executed from state $s$ requires some search (to find an instruction, amongst those that are subsumed by $s$,  with the smallest rank).

SAT-based synthesis \cite{fondsat18} decides, through a compact encoding whose size does not grow with
the state space, whether a controller within a given controller-size bound exists (taking the bound to be the size of the state space, it is also complete).

Best-effort solutions were studied in the games-on-graphs community~\cite{berwanger07,DBLP:conf/mfcs/Faella09,DBLP:conf/csl/BrenguierRS14}, and later studied into the context of temporal synthesis~\cite{aminof21}, and robotics~\cite{DBLP:conf/icra/MuvvalaAL22}. A recent work provided a tool for finding best-effort policies for FOND problems with LTLf goals~\cite{syft23}.

Finally, antichain/upward-closed representations have been used in synthesis, automata, and games on graphs. 
E.g., De Wulf, Doyen, Henzinger, and Raskin~\cite{DBLP:conf/cav/WulfDHR06} proposed a least fixed point algorithm on the lattice of antichains of state sets to check the universality of finite automata, avoiding the cost of explicit determinization. Later, an incremental antichain algorithm that makes use of the problem structure is proposed in the context of LTL synthesis to solve the LTL realizability problem~\cite{DBLP:conf/cav/FiliotJR09}.   That algorithm reduces LTL realizability to a safety game, and both it and our work are built on the same antichain concept: the winning region is closed under a game-simulation order, so antichains fully represent it; in particular, their safety objective is the counterpart of our strong objective. We differ in domain and how the antichain is grown at each step.

\section{Preliminaries}
In this section, we fix our notation, provide preliminaries on FOND planning and the relevant solution concepts, and some elementary facts about upward-closed sets and antichains.

For a sequences $x = x_0 x_1 x_2 \cdots$, we write $x_{<i}$ for the prefix $x_0 x_1 \cdots x_{i-1}$ of length $i$.

\subsection{FOND planning problems}
A \emph{fully-observable non-deterministic (FOND) planning problem} is a tuple $\Pi = \langle \mathit{At}, I, G, A\rangle$, where
$\mathit{At}$ is a finite set of propositional atoms, $I \subseteq \mathit{At}$ is the \emph{initial} condition, $G\subseteq \mathit{At}$ is the \emph{goal} condition, and $A$ is the set of \emph{actions}. Every action 
$a=\langle \mathit{pre}_a, E_a\rangle$ consists of a precondition $\mathit{pre}_a \subseteq \mathit{At}$ and a finite non-empty set $E_a = \{e_1,\dots,e_{K_a}\}$ of \emph{effects}. Each effect $e=\langle \mathit{add}_e, \mathit{del}_e\rangle$ consists of \emph{delete atoms} $\mathit{del}_e \subseteq \mathit{At}$ and \emph{add atoms} $\mathit{add}_e \subseteq \mathit{At}$.

The semantics of FOND are as follows. A \emph{domain state} (aka \emph{state}) is a function $s:\mathit{At} \to \{\true, \false\}$. The set of all states is denoted $S$. The \emph{initial state}, denoted $s^I$, maps $p$ to $\true$ iff $p \in I$. A state $s$ is a \emph{goal state} if $s(p) = \true$ for every $p \in G$. From now on, we will also identify a state $s$ with the set $s^{-1}(\true)$ of atoms in it that are true. An action $a$ is \emph{applicable} in a state $s$ if $\mathit{pre}_a \subseteq s$. The \emph{successor} of a state $s$ under effect $e = \langle \mathit{add}_e, \mathit{del}_e\rangle$ is $s[e] := (s \setminus \mathit{del}_e) \cup \mathit{add}_e$. A trajectory is an alternating sequence $\tau = s_0, a_0, e_0, s_1, \ldots$ where (i) $a_i$ is applicable in $s_i$, (ii) $e_i \in E_{a_i}$, and (iii) $s_{i+1} = s_i[e_i]$; a \emph{history} $h$ is a finite trajectory. Write $\last(h)$ for the last character of $h$. A state $s$ is \emph{reachable} from $s_0$ if there is a history $h$ starting from $s_0$ with $\last(h) = s$. A trajectory $\tau = s_0, a_0, e_0, \ldots$ is \emph{goal reaching} if there is some $i$ such that $s_i$ is a goal state.

An \emph{agent history} is one that ends in a state.  An  \emph{agent policy} (aka \emph{policy}) $\pi$ is a mapping from agent histories to applicable actions, i.e., for an agent history $h$, the action $\pi(h)$ needs to be applicable in the state $\last(h)$. A trajectory $\tau = s_0, a_0, e_0, \ldots$ and a policy $\pi$ are \emph{consistent} if for every agent history $\tau_{< i}$ that is a prefix of $\tau$, we have that $\pi(\tau_{< i}) = \tau_i$, i.e., the policy selects the next action as specified on the trajectory.  A policy is \emph{positional} (aka \emph{memoryless}) if $\pi(h)$ only depends on $\last(h)$, for every agent history $h$; in this case, we overload notation and write $\pi:S \to A$. For convenience, we allow policies (resp. positional policies) to be partial functions $\pi$: such a function $\pi$ is assumed to denote a policy that agrees with $\pi$ on its domain, and is arbitrarily defined on all histories (resp. states) not in the domain of $\pi$.

\subsection{Solution concepts for FOND planning problems} 
\label{sec:solution_concept}

In this section we recall the following solution concepts for FOND planning problems: strong solutions and weak solutions~ \cite{cimatti03}, best-effort solutions~\cite{berwanger07, aminof21}.

An \emph{environment history} is a history that ends in an action. An \emph{environment choice-function} (aka \emph{environment}) $\delta$ maps environment histories to effects, i.e., if $\last(h)$ is the action $a$, then $\delta(h) \in E_a$. A trajectory $\tau = s_0, a_0, e_0, \ldots$ and an environment $\delta$ are \emph{consistent} if for every  environment history $\tau_{< i}$ that is a prefix of $\tau$, we have that $\pi(\tau_{< i}) = \tau_i$, i.e., the environment resolves the nondeterminism as specified on the trajectory. An agent policy $\pi$, an environment choice-function $\delta$, and a state $s$ together determine a trajectory $\mathit{play}(\pi,\delta, s)$ called the \emph{play} (resulting from $\pi,\delta, s$), i.e., the unique trajectory consistent with both $\pi, \delta$ starting from $s$.

An agent policy $\pi$ is \emph{strong} (resp. \emph{weak}) \emph{winning from} $s$ iff every (resp. some) environment choice-function $\delta$, the trajectory $\mathit{play}(\pi,\delta, s)$ is goal reaching.\footnote{This is equivalent to saying that every (resp. some) trajectory consistent with $\pi$ starting from $s$ is goal reaching.}

The \emph{strong winning-region} is
the set $T$ of states $s$ for which  there exists an agent policy $\pi$ that is strong winning from $s$. The states in $T$ are called \emph{strong-winning} states. A \emph{uniformly strong-winning policy} is a policy that is strong winning from every state $s \in T$. 
By a classic fixpoint contruction (cf.~\cite{cimatti03,Fijalkow_2026}), $T$ can be constructed by: $T = \cup_{i \geq 0} T_i$ where $T_{0}$ is the set of goal states, and $T_{i+1}$ is defined as the union of $T_i$ and the \emph{universal preimage} of $T_i$:
\begin{equation}
\{\, s \in S : \exists a \in A,\ \mathit{pre}_a \subseteq s
  \wedge\ \forall e \in E_a,\ s[e] \in T_i \,\}\label{eq:strong}
\end{equation}
We call $T_i$ the $i$-th \emph{layer} of the strong-winning region. There exists $j$ such that $T_{j+1} = T_j$.
For state $s \in T_i \backslash T_{i - 1}$ for $i > 0$, an action $a$ is a \emph{witnessing action} of $s$ in $T$ iff $\forall e \in E_a: s[e] \in T_{i - 1} $. Denote the set of witnessing action of $s$ in $T$ to be $\witstr{s}$. A positional policy $\pi$ that satisfies $\pi(s) \in \witstr{s}$ is a uniformly strong winning policy.

Similarly, the \emph{weak winning-region} is the set $W$ of states $s$ such that there exists an agent policy $\pi$ that is weak winning from $s$. The states in $W$ are called \emph{weak-winning} states. A \emph{uniformly weak policy} is a policy that is weak winning from every state $s \in W$. 
The attractor construction of $W$ is as follow: $W = \cup_{i \geq 0} W_i$ where $W_0$ is the set of goal states, and $W_{i+1}$ is defined as the union of $W_i$ and the \emph{existential preimage} of $W_i$:
\begin{equation}
W_i \cup \{\, s : \exists a,\ \mathit{pre}_a \subseteq s
  \ \wedge\ \exists e \in E_a,\ s[e] \in W_i \,\}\label{eq:weak}
\end{equation}
We call $W_i$ the $i$-th \emph{layer} of weak-winning region. There exists $j$ such that $W_{j+1} = W_j$.
For state $s \in W_i \backslash W_{i - 1}$ for $i > 0$, an action $a$ is a \emph{witnessing action} of $s$ in $W$ iff $\exists e \in E_a: s[e] \in W_{i - 1}$. Denote the set of witnessing action of $s$ in $W$ to be $\witwk{s}$. A positional policy $\pi$ such that  $\pi(s) \in \witwk{s}$ is a  uniformly weak winning.

Fix the initial FOND domain state $s^I$ as a starting state. For policies $\pi_1,\pi_2$, write $\pi_1 \leq \pi_2$ (read ``$\pi_2$ dominates $\pi_1$'') iff for every environment choice-function $\delta$, if 
$\mathit{play}(\pi_1,\delta, s^I)$ is goal reaching then also 
$\mathit{play}(\pi_2,\delta, s^I)$ is goal reaching. Write $\pi_1 < \pi_2$ (read ``$\pi_2$ strictly dominates $\pi_1$'') if $\pi_1 \leq \pi_2$ and it is not the case that $\pi_2 \leq \pi_1$. A policy $\pi$ is \emph{best-effort} if no policy  strictly dominates it. Best-effort policies always exist~\cite{berwanger07,aminof21}. In particular, since 
strong-winning policies dominate all others, if a strong-winning policy from $s^I$ exists then the best-effort policies are exactly the strong-winning policies.
Given a positional uniform strong-winning policy $\pi_{t}$, and  position uniform weak-winning policy $\pi_{w}$, define the following positional policy 
\begin{equation}
    \pi_{be}(s) := 
\begin{cases}
    \pi_{t}(s) &\text{ if } s \in T\\
    \pi_{w}(s) &\text{ if } s \in W \setminus T 
\end{cases}
\end{equation}

Then $\pi_{be}$ is a best-effort policy. This follows from the graph-based algorithm~\cite{aminof21} that constructs a best-effort strategy for a linear-time temporal logic objective.

\subsection{Upward-closed sets and Antichains}

As discussed in the introduction, we will represent certain sets of interest by their minimal elements. Here we provide some basic definitions and Lemmas.

Consider the subset ordering $\subseteq$ on $S = 2^{\mathit{At}}$ (recall our convention of viewing states as sets of atoms). 
For a set $X \subseteq S$ of states:
\begin{enumerate}
\item Let $\up{X}$ denote the upwards-closure of $X$, i.e., $\bigcup_{x \in X} \{s \in S : x \subseteq s\}$. 
We overload the notation, for single state $s$, and simply write $\up{s}$ instead of $\up{\{s\}}$.
\item Let $\Min{X}$ be the set of states in $X$ that are minimal in the ordering, i.e., $\Min{X} = \{s \in X : \lnot \exists s' \in X. s' \subsetneq s\}$. We call $\Min{X}$ the \emph{basis} of $X$. 
\item Call $X$ an \emph{antichain} if for all distinct $s,s' \in X$, it is not the case that $s \subseteq s'$.
\end{enumerate}

The first Lemma says that upward closed sets can be represented by their minimal elements.

\begin{lemma}[Bases]
\label{lem:basis}
Let $X$ be upward closed.  Then $\Min{X}$ is the unique
antichain $B$ with $\up{B} = X$.
\end{lemma}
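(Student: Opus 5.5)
The argument splits into three parts: $\Min{X}$ is an antichain, its upward closure is $X$, and any antichain $B$ with $\up{B}=X$ equals $\Min{X}$. Finiteness of $S = 2^{\mathit{At}}$ is what makes the second part work.

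\emph{$\Min{X}$ is an antichain.} Let $s,s'\in\Min{X}$ be distinct with $s\subseteq s'$. Then $s\subsetneq s'$ and $s\in X$, which contradicts the minimality of $s'$.

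\emph{$\up{\Min{X}}=X$.} For the inclusion $\up{\Min{X}} \subseteq X$, note that $\Min{X}\subseteq X$ and $X$ is upward closed. For the converse, fix $s\in X$. The set $\{s'\in X : s'\subseteq s\}$ is finite and nonempty, since it contains $s$. Choose an element $m$ of minimum cardinality in it. Then $m\in\Min{X}$: if some $m'\in X$ had $m'\subsetneq m$, then $m'\subseteq s$ and $|m'|<|m|$, contradicting the choice of $m$. Since $m\subseteq s$, we get $s\in\up{\Min{X}}$. This well-foundedness step is the only place finiteness is used, and it is the main point to check.

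\emph{Uniqueness.} Let $B$ be an antichain with $\up{B}=X$. We show $B=\Min{X}$ by double inclusion.

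First take $b\in B$; then $b\in X$. Suppose $x\in X$ satisfies $x\subsetneq b$. Since $x\in\up{B}$, there is $b'\in B$ with $b'\subseteq x\subsetneq b$. Thus $b'\neq b$ and $b'\subseteq b$, contradicting that $B$ is an antichain. Hence $b\in\Min{X}$.

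Conversely, take $m\in\Min{X}$. Since $m\in\up{B}$, there is $b\in B$ with $b\subseteq m$. As $b\in X$, minimality of $m$ forces $b=m$, so $m\in B$.

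Therefore $B=\Min{X}$, and $\Min{X}$ is the unique antichain whose upward closure is $X$.
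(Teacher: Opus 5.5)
Your proof is correct and follows the same structure as the paper's: $\Min{X}$ is an antichain by minimality, $\up{\Min{X}} = X$ by upward closure plus a minimal element below each $s \in X$, and uniqueness by the same double-inclusion argument. Choosing a minimum-cardinality element of $\{s' \in X : s' \subseteq s\}$ makes explicit the well-foundedness step that the paper only asserts, namely that every non-minimal $s \in X$ lies above some element of $\Min{X}$.
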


\begin{proof}
Elements of $\Min{X}$ are $\subseteq$-incomparable by
minimality, so the family is an antichain. We have $\up{\Min{X}} \subseteq X$ holds because $X$ is upward closed and 
$\Min{X} \subseteq X$. On the other side, we have $X \subseteq \up{\Min{X}}$ because
for every $s \in X$ if $s \notin \Min{X}$, then there is $s' \in \Min{X}$ such that $s' \subsetneq s$, i.e., $s \in \up{\Min{X}}$.  

For uniqueness, take an
antichain $B$ with $\up{B} = X$. If $y \in X$ satisfied $y \subsetneq b$ for some $b \in
B$, then $y \in \up{B}$ would give $b' \in B$ with $b' \subseteq y
\subsetneq b$, which is a contradiction.
Hence $B \subseteq \Min{X}$.  Conversely, $x \in \Min{X} \subseteq X = \up{B}$ there is $b \in B$ with $b \subseteq x$, and
since $b \in X$ while $x$ is minimal in $X$, $b = x$; so 
$\Min{X} \subseteq B$.
\end{proof}

The second Lemma says how to build up the minimal elements of an upward closed set.
\begin{lemma}[Bases Extension]
\label{lem:absorb}
Let $X$ be upward closed, $B$ be an antichain with $\up{B} \subseteq X$, $x \in S$ such that $\up{x} \subseteq X$ and $x \notin \up{B}$.  Then $B' =
(B \setminus \{b \in B : x \subsetneq b\}) \cup \{x\}$ is an antichain and
$\up{B} \subsetneq \up{B'} = \up{B} \cup \up{x} \subseteq X$.
\end{lemma}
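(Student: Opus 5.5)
The plan is to verify the three claims about $B'$ separately: that it is an antichain, that $\up{B'} = \up{B} \cup \up{x}$, and that $\up{B} \subsetneq \up{B'} \subseteq X$. All three follow from the definition of $\up{\cdot}$ and transitivity of $\subseteq$; no earlier lemma is needed, though Lemma~\ref{lem:basis} motivates the construction.

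\emph{Antichain.} Take distinct $u,v \in B'$. If both lie in $B$, they are incomparable because $B$ is an antichain. Otherwise, without loss of generality $u = x$ and $v = b \in B$ with $b$ not removed, so $x \not\subsetneq b$. Since $b \neq x$, this gives $x \not\subseteq b$. Also $b \not\subseteq x$, because $b \subseteq x$ would put $x \in \up{B}$, contradicting the hypothesis $x \notin \up{B}$. So $u$ and $v$ are incomparable.

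\emph{Equality of closures.} The inclusion $\up{B'} \subseteq \up{B} \cup \up{x}$ holds because $B' \subseteq B \cup \{x\}$ and $\up{\cdot}$ is monotone and distributes over unions. For the reverse inclusion, take $s \in \up{B} \cup \up{x}$. If $s \in \up{x}$, or $s \supseteq b$ for some $b$ that was kept, then $s \in \up{B'}$ directly. The only remaining case is $s \supseteq b$ for some removed $b$. Then $x \subsetneq b \subseteq s$, so $s \in \up{x} \subseteq \up{B'}$. This removed-element case, handled by transitivity through $x$, is the only mildly delicate step in the proof.

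\emph{Strict inclusion and upper bound.} We have $\up{B} \subseteq \up{B'}$ by the equality just shown. The inclusion is strict because $x \in \up{x} \subseteq \up{B'}$ while $x \notin \up{B}$ by hypothesis. Finally, $\up{B'} = \up{B} \cup \up{x} \subseteq X$ follows from the hypotheses $\up{B} \subseteq X$ and $\up{x} \subseteq X$.
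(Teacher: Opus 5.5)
Your proof is correct and follows essentially the same route as the paper's: the antichain property comes from $b \not\subseteq x$ (since $x \notin \up{B}$) together with the removal step, the reverse inclusion routes a removed $b \subseteq s$ through $x \subsetneq b \subseteq s$ into $\up{x}$, and strictness is witnessed by $x$ itself. You are slightly more careful than the paper in two places: you use $b \neq x$ to turn the failure of $x \subsetneq b$ into $x \not\subseteq b$, and you explicitly derive $\up{B'} \subseteq X$ from the hypotheses, which the paper leaves implicit.
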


\begin{proof}
$B'$ consists of $x$ and of the elements of $B$ that do not strictly
contain $x$.  The latter are pairwise incomparable, and $x$ is comparable with none of them: $b \not\subseteq x$ because $x \notin \up{B}$ and $x \not\subseteq b$ because of the set exclusion operation.  

We have $B'
\subseteq B \cup \{x\}$ gives $\up{B'} \subseteq \up{B} \cup \up{x}$;
conversely, every $s \in \up{B} \cup \up{x}$ lies in $\up{B'}$: If $x
\subseteq s$ this holds because $x \in B'$. Otherwise $s \in \up{B}$ and there is $b \subseteq s$ that survives in $B'$ because if $x \subsetneq b$ then $x \subsetneq s$, and thus $s \in \up{x}$ which is a contradiction. Therefore $s \in \up{B'}$ in both cases. Since $x \notin \up{B}$, we also have that $\up{B} \subsetneq \up{B'}$
\end{proof}

\section{Representing policies by instruction-sets}

In this work we represent a memoryless policy by a set $\mathcal{R}$ of \emph{ranked instructions}. A \emph{ranked instruction} (or simply \emph{instruction}) is a 3-tuple $\langle x, a, r\rangle$: a state $x$, an action $a$ that can be executed from
every state $s$ with $x \subseteq s$, and a non-negative integer \emph{rank} $r$. The \emph{size} of $\mathcal{R}$ is the number of instructions in it.

For a state $s$ denote
\begin{equation}
\mathcal{R}(s)=\{\,\langle x,a,r\rangle\in\mathcal{R} : x\subseteq s\,\}
\label{eq:record-rep}
\end{equation}
We call $\mathcal{R}(s)$ the \emph{applicable} instructions (from $\mathcal{R}$) for $s$. Let $\rankr{s}\min\{r:\langle x,a,r \rangle\in\mathcal{R}(s)\}$ be
its minimum rank.
Then the induced memoryless policy $\pi_{\mathcal{R}}$ is
\begin{equation}
  \pi_{\mathcal{R}}(s) = a \text{ s.t. } \exists
  \langle x, a, \rankr{s}\rangle \in \mathcal{R}(s),
\end{equation}
and if multiple such $a$ exist we tie-break arbitrarily. \whentime{Either allow real partial policies in the dfns, or talk about the fact that $\mathcal{R}$ induces a partial policy, and so when executing we need a way to (quickly) assign an action to histories/states that are not in the domain of the policy.}

Now we show that this representation is complete for memoryless policies:
\begin{proposition}[Completeness of the Representation]
  \label{prop:complete}
  Given a memoryless policy $\pi$, there is a set $\mathcal{R}$ of instructions such that the induced policy $\pi_{\mathcal{R}}$ is exactly $\pi$.
\end{proposition}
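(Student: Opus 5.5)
The plan is to build $\mathcal{R}$ explicitly with one instruction per state, using ranks to break ties in favour of the most specific instruction. Enumerate the states of $S = 2^{\mathit{At}}$ as $s_1, s_2, \ldots, s_N$ in any order that is a linear extension of $\supseteq$. That is, whenever $s_j \supsetneq s_k$ we require $j < k$; sorting by decreasing cardinality suffices. For each $s_j$ on which $\pi$ is defined, add the instruction $\langle s_j, \pi(s_j), j\rangle$. Since $\pi$ may be partial, states outside its domain get no instruction, which is consistent with the paper's convention for partial policies.

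The first check is that each triple is a legitimate instruction: $\pi(s_j)$ must be executable from every state $s$ with $s_j \subseteq s$. This holds because $\pi(s_j)$ is applicable in $s_j$, so $\mathit{pre}_{\pi(s_j)} \subseteq s_j \subseteq s$.

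The main step is to show that $\pi_{\mathcal{R}}(s) = \pi(s)$ for every $s$ in the domain of $\pi$.
\begin{enumerate}
\item Fix such an $s$, say $s = s_j$. The applicable set $\mathcal{R}(s)$ contains $\langle s_j, \pi(s_j), j\rangle$, and its other members are instructions $\langle s_k, \pi(s_k), k\rangle$ with $s_k \subsetneq s_j$.
\item By the choice of enumeration, every such $k$ satisfies $k > j$.
\item Ranks are distinct, so the minimum rank $\rankr{s}$ equals $j$ and is attained by exactly one instruction.
\item Hence no tie-breaking occurs, and $\pi_{\mathcal{R}}(s) = \pi(s_j)$.
\end{enumerate}

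For states outside the domain of $\pi$, the induced policy may pick some action or none. This is harmless, since $\pi$ is arbitrary there. If one insists that $\pi$ be total, then every state has an instruction and equality holds everywhere.

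The only delicate point is this ordering of ranks. A state can be subsumed by many smaller instructions, so ranks must give strict priority to the exact-match instruction. A linear extension by decreasing cardinality achieves this, and using distinct ranks removes any dependence on the arbitrary tie-break. Everything else is routine.
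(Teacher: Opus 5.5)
Your proposal is correct and is essentially the paper's proof: one instruction $\langle s,\pi(s),r(s)\rangle$ per state, with ranks strictly decreasing along $\subsetneq$, so the exact-match instruction is the unique minimum-rank member of $\mathcal{R}(s)$. The only differences are cosmetic: the paper uses the non-injective rank $|\mathit{At}\setminus s|$, which already suffices because only subsets of $s$ compete, while your linear extension makes ranks globally distinct; you also check explicitly that each triple is a legitimate instruction and handle partial $\pi$, which the paper leaves implicit.
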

\begin{proof}
  Suppose we have an assignment $r(s)$ from each state $s$ to a
  non-negative integer such that if $s \subsetneq s'$ then $r(s) > r(s')$.

  For each state $s$, add the tuple
  $\langle s, \pi(s), r(s)\rangle$ to $\mathcal{R}$. For every state $s$, and for every instruction $\langle x, a, r\rangle \in \mathcal{R}(s)$, we have $r(s) < r(x)$ if $x \neq s$. Thus $\rankr{s} = r(s)$, and $s, \pi(s), r(s) \rangle$ is the only intruction in $\mathcal{R}(s)$ with rank $r(s)$. Therefore $\pi_{\mathcal{R}}(s) = \pi(s)$.

  One possible assignment is $d(s) = |\mathit{At} \setminus s|$, the number
  of atoms absent from $s$: if $s \subsetneq s'$ then
  $\mathit{At} \setminus s \supsetneq \mathit{At} \setminus s'$ and hence
  $d(s) > d(s')$.
\end{proof}

The representation $\mathcal{R}_\pi$ of $\pi$ in this proof is naive: it has an instruction for each domain state.  However, for some domains and classes of policies, there are smaller instructions sets (compared to the number of states in the domain).

\section{Planning with antichains}
In this section we observe that since the winning regions are upward closed, there is the potential for compact representations of both the winning regions (as the set of their minimal elements), and of uniformly winning policies (as certain instruction sets).  We first introduce an algorithm to efficiently compute the representations of the winning regions, and then show how to  modify the algorithm to also retrieve a potentially compact ranked instructions set for positional uniform-winning policies. We do this for both strong-winning and weak-winning.

\subsection{Representing Winning Region By Bases}
\label{sec:upward}

The first Theorem states that the layers and winning regions are upwards closed. It uses the property of FOND planning problems that preconditions are monotone and that the successor updates are monotone: $(\star)$ if $s \subseteq s'$ then (i) $\mathit{pre}_a \subseteq s$ implies
$\mathit{pre}_a \subseteq s'$, and (ii)  for every effect $e$,
$
s[e] \subseteq s'[e] 
$.

\begin{theorem}[Upward closure]
\label{thm:upward}
For $X \in \{T, W\}$ and $i \geq 0$, $s \in X_i$ and $s \subseteq s'$
imply $s' \in X_i$.  Hence, so are the regions $T$ and $W$.
\end{theorem}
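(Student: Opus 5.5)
We argue by induction on $i$, treating $X = T$ and $X = W$ in parallel, since their layer definitions differ only in the quantifier over effects. The only ingredient is the monotonicity property $(\star)$ stated just before the theorem.

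\emph{Base case.} $T_0 = W_0$ is the set of goal states. If $G \subseteq s$ and $s \subseteq s'$, then $G \subseteq s'$, so $s'$ is again a goal state.

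\emph{Inductive step.} Assume $X_i$ is upward closed, and let $s \in X_{i+1}$ with $s \subseteq s'$. If $s \in X_i$, the hypothesis gives $s' \in X_i \subseteq X_{i+1}$. Otherwise $s$ lies in the universal (resp.\ existential) preimage of $X_i$, so there is an action $a$ with $\mathit{pre}_a \subseteq s$ such that $s[e] \in X_i$ for every (resp.\ some) $e \in E_a$. We show that the same $a$ works for $s'$:
\begin{itemize}
\item By $(\star)$(i), $a$ is applicable in $s'$.
\item For each relevant effect $e$, $(\star)$(ii) gives $s[e] \subseteq s'[e]$. Since $s[e] \in X_i$ and $X_i$ is upward closed, $s'[e] \in X_i$.
\end{itemize}
In the strong case this holds for all $e \in E_a$; in the weak case, take the same witnessing effect $e$. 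Either way $s'$ is in the corresponding preimage of $X_i$, hence in $X_{i+1}$.

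It remains to check $(\star)$(ii) itself. If $p \in s[e] = (s \setminus \mathit{del}_e) \cup \mathit{add}_e$, then either $p \in \mathit{add}_e$, or $p \in s \setminus \mathit{del}_e \subseteq s' \setminus \mathit{del}_e$. In both cases $p \in s'[e]$.

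\emph{Regions.} $X = \bigcup_i X_i$ is a union of upward-closed sets, so it is upward closed: if $s \in X_i$ and $s \subseteq s'$, then $s' \in X_i \subseteq X$. (Alternatively, $X = X_j$ for the stabilization index $j$.)

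The argument is routine. The one point needing care is the inductive step, where the same action and the same effect(s) must be reused for $s'$. This reuse is exactly what $(\star)$ provides, and it depends on the delete-then-add semantics of $s[e]$.
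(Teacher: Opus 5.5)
Your proposal is correct and follows essentially the same route as the paper: induction on the layer index with the goal-state base case, an inductive step that reuses the same action (and, in the weak case, the same effect) for $s'$ via the monotonicity property $(\star)$, and closure of upward-closed sets under union. You also verify $(\star)$(ii) explicitly and apply the hypothesis to get $s' \in X_i$ in the case $s \in X_i$; the paper leaves both of these small details implicit.
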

\begin{proof}
Induct on the layer $i$. 

\textbf{Base case:} $T_0 = W_0 = \{s : G \subseteq s\}$, and a
superset of a state containing $G$ contains $G$ too.  

\textbf{Inductive step:} let
$s \subseteq s'$ with $s \in T_{i+1}$.  If $s \in T_i \subseteq T_{i + 1}$ then we are done.  Otherwise, there is some applicable action such that for every effect $e \in E_a$ we have that $s[e] \in T_i$. By property $(\star)$
$\mathit{pre}_a \subseteq s'$ (and so $a$ is applicable in $s'$), and $s[e] \subseteq s'[e]$, so 
$s'[e] \in T_i$ by the induction hypothesis. So
$s' \in T_{i+1}$.  

The weak-winning case is the same argument with the
existential condition of \eqref{eq:weak}: if $s \in W_{i}$ we are done, otherwise one outcome $s[e] \in W_i$ of an applicable action gives $s'[e] \in W_i$ by the hypothesis ($a$ applies in $s'$ too),
so $s' \in W_{i+1}$.  

Since upward-closed sets are closed under union, also the region $X$ is upwards-closed.
\end{proof}

Therefore by lemma~\ref{lem:basis}, we can represent winning regions $T$ and $W$ by only using their bases $\Min{T}$ and 
$\Min{W}$.

\subsection{Computing the bases of the winning regions} \label{sec:compute_winning_region}
In this section we provide an algorithm to compute the basis of the strong winning-region and then show how to modify it to compute the basis of the weak winning region.

We first introduce some notation. For a set $X$ of states, let $\pret{X}$ consist of states of the form 
\begin{equation} \label{lab:pre}
\mathit{pre}_a \cup \bigcup_{k=1}^{K} (\,b_k \setminus
\mathit{add}_k\,)
\end{equation}
where $a \in A$ is an action, $K := |E_a|$, and for $1 \leq k \leq K$ we have (i) $b_k \in X$,  (ii) $b_k \cap del_k = \emptyset$, and (iii) $b_k \cap add_k \neq \emptyset$. \whentime{when time, add a wordy description of Pre}

\begin{remark}
While we will not use this fact, it might be helpful to observe that the $\up{\pret{X}}$ is exactly the universal preimage (see~(\ref{eq:strong})) of $\up{X}$, i.e., for a state $s$, there exists an action $a$ such that for all $e_k \in E_a$ we have that $s[e_k] \in \up{X}$ if and only if $s$ is a superset of a state in $\pret{X}$. 
\end{remark}

\textbf{Algorithm I: Compute the basis of the strong-winning (resp. weak-winning) region}. 

The algorithm will compute a sequence of sets of states: $B_0, B_1, B_2, \cdots$

\begin{enumerate}
\item Let $B_0 = \{G\}$, i.e., $B_0$ consists of the set of goal atoms. 
\item For $i > 0$, we define $B_{i+1}$ in two stages: state selection and basis extension.
\begin{enumerate}
\item Select a state $x_i$ from the set of \emph{candidates} $\cand{B_i} := \Min{\pret{B_i}} \backslash \up{B_i}$.\footnote{ Theoretically, any $x_i \in \cand{B_i}$ will be a valid selection, thus we can select arbitrarily. However in practice (which will be discussed more in the implementation section) there are certain heuristic selection functions that will be helpful.}
\item  Extend the basis as follows:
\begin{equation}
\label{eq:grow}
B_{i+1} = \bigl( B_i \setminus \{ b \in B_i : x_i \subsetneq b\} \bigr) \cup \{ x_i \}
\end{equation}
\end{enumerate}

\item We stop at iteration $i$ when there are no candidates at that iteration, and let $B = B_i$ be the resulting set of states. 
\end{enumerate}
By construction, $B$ is an antichain. Moreover, as we will prove, $B$ is the basis of the strong-winning region.

Similarly, we have a variation of this algorithm which results in the basis $B$ of the weak-winning region: instead of using $\pret{B_i}$, we use $\prew{B_i}$. For a set $X$ of states $\prew{X}$ consisting of states of the form 
\[
\mathit{pre}_a \cup\ (\,b_k \setminus
\mathit{add}_k\,)
\]
where $a$ is an arbitrary action, $K = |E_a|$, $1 \leq k \leq K$, $b_k \in X$, $b_k \cap del_k = \emptyset$ and $b_k \cap add_k \neq \emptyset$.
In this case, $\up{\prew{X}}$ is exactly existential preimage (see~(\ref{eq:weak})) of $\up{X}$, i.e., for a state $s$, there exists an action $a$ 
and an effect $e_k \in E_a$
such that $s[e_k] \in \up{X}$  if and only if $s$ is a superset of some state in $\prew{X}$.
Again, by construction, $B$ is an antichain, and we will prove that it is the basis of the weak-winning region.

\begin{theorem}[Correctness of Algorithm~I]
\label{thm:correctness}
Fix any selection of candidates at each iteration. The set $B$ produced by the algorithm is the basis of $T$ (resp. $W$). 
\end{theorem}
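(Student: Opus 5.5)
The plan is to prove three things about the sequence $B_0,B_1,\dots$: an invariant $\up{B_i}\subseteq T$ (soundness), termination, and $T\subseteq\up{B}$ at termination (completeness). Since $B$ is an antichain with $\up{B}=T$, and $T$ is upward closed by Theorem~\ref{thm:upward}, Lemma~\ref{lem:basis} then gives $B=\Min{T}$. The weak case is identical with $\prew{\cdot}$, $W$ and the existential preimage replacing $\pret{\cdot}$, $T$ and the universal preimage, so I only describe the strong case.

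\textbf{Step 1: preimage characterization.} I first prove the statement of the Remark. For any antichain (indeed any set) $X$, a state $s$ lies in the universal preimage of $\up{X}$ iff $s\supseteq y$ for some $y\in\pret{X}$.
\begin{itemize}
\item For ($\Leftarrow$), take $y=\mathit{pre}_a\cup\bigcup_k(b_k\setminus\mathit{add}_k)\subseteq s$. Then $a$ is applicable in $s$. For each effect $e_k$, every atom of $b_k$ either lies in $\mathit{add}_k$ or lies in $s$ and, by condition (ii), not in $\mathit{del}_k$. Hence $b_k\subseteq s[e_k]$.
\item For ($\Rightarrow$), given a witnessing $a$ and $b_k\subseteq s[e_k]$, the atoms of $b_k\setminus\mathit{add}_k$ must come from $s\setminus\mathit{del}_k$. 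So the union $y$ built from these $b_k$ is contained in $s$.
\end{itemize}
This is the step I expect to be the main obstacle. The side conditions (ii)/(iii) in the definition of $\pret{\cdot}$ must be matched exactly against the successor semantics $s[e]=(s\setminus\mathit{del}_e)\cup\mathit{add}_e$; in particular, atoms in both $\mathit{del}_k$ and $\mathit{add}_k$, and choices of $b_k$ disjoint from $\mathit{add}_k$, need care. I would first verify the equivalence directly from the semantics, and adjust the reading of (ii)/(iii) if needed so that precisely the $b_k$ with $b_k\subseteq s[e_k]$ are admitted. Everything else in the proof uses only this equivalence.

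\textbf{Step 2: soundness and termination.} I argue by induction on $i$. We have $\up{B_0}=T_0\subseteq T$. Suppose $\up{B_i}\subseteq T$ and $x_i\in\cand{B_i}$ is selected. By Step 1, $x_i$ lies in the universal preimage of $\up{B_i}\subseteq T$. Because $T$ is a fixpoint of the layer construction, this preimage is contained in $T$, so $x_i\in T$. Theorem~\ref{thm:upward} then gives $\up{x_i}\subseteq T$. Also $x_i\notin\up{B_i}$ by the definition of candidates. Lemma~\ref{lem:absorb} therefore applies: $B_{i+1}$ is an antichain and $\up{B_i}\subsetneq\up{B_{i+1}}\subseteq T$. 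Since $S$ is finite and $\up{B_i}$ strictly increases, the algorithm terminates.

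\textbf{Step 3: completeness.} At termination $\cand{B}=\emptyset$. I show $T_j\subseteq\up{B}$ by induction on $j$; the base case is $T_0=\up{\{G\}}\subseteq\up{B}$, which holds because $\up{B_i}$ only grows. For the step, let $s\in T_{j+1}\setminus T_j$. Then $s$ is in the universal preimage of $T_j\subseteq\up{B}$, so by Step 1 there is $y\in\pret{B}$ with $y\subseteq s$. Since $\pret{B}$ is finite, there is $y'\in\Min{\pret{B}}$ with $y'\subseteq y$. If $y'\notin\up{B}$, then $y'$ would be a candidate, which is a contradiction. Hence $y'\in\up{B}$, and upward closure gives $s\in\up{B}$. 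Thus $T=\up{B}$, and Lemma~\ref{lem:basis} concludes.
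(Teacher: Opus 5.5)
Your overall architecture matches the paper's proof: the invariant $\up{B_i}\subseteq T$ via Lemma~\ref{lem:absorb}, termination by strict growth, completeness by induction on the layers $T_j$, and then Lemma~\ref{lem:basis}. Step~2 is fine, because it only uses the $(\Leftarrow)$ half of Step~1, which is essentially Lemma~\ref{lem:cert}.

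The gap is the $(\Rightarrow)$ half, on which Step~3 rests. With the paper's definition of $\pret{\cdot}$ it is false, because condition (iii) requires $b_k\cap\mathit{add}_k\neq\emptyset$ for \emph{every} $k$. For example, take $G=\{g\}$ and a single action $a$ with $\mathit{pre}_a=\{p\}$ and one effect adding $q$. The state $\{p,g\}$ lies in the universal preimage of $\up{\{G\}}$, yet $\pret{\{G\}}=\emptyset$. So the Remark's ``exactly'' is inaccurate, which is why the paper does not use it. Your fallback, re-reading (ii)/(iii) so that every $b_k\subseteq s[e_k]$ is admitted, changes $\pret{\cdot}$ and hence the algorithm whose correctness is being claimed. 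Dropping (iii) happens to be harmless: every added element contains the offending $b_k$, so it lies in the upward-closed set $\up{B_i}$ and cannot affect $\cand{B_i}$. But you would have to prove that. Relaxing (ii), by contrast, changes the algorithm whenever some effect has overlapping add and delete lists.

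The missing idea is the paper's case split. Given $b_k\in B$ with $b_k\subseteq s[e_k]$:
\begin{itemize}
\item If some $b_k$ is disjoint from $\mathit{add}_k$, then $b_k\subseteq s\setminus\mathit{del}_k\subseteq s$. So $s\in\up{B}$ directly, with no candidate needed.
\item Otherwise $y=\mathit{pre}_a\cup\bigcup_k(b_k\setminus\mathit{add}_k)\in\pret{B}$, and your minimal-element argument finishes. That argument also makes explicit a step the paper glosses when it says $\cand{B}=\emptyset$ means $\pret{B}\subseteq\up{B}$.
\end{itemize}
In other words, replace Step~1's $(\Rightarrow)$ by the weaker claim that the universal preimage of $\up{X}$ is contained in $\up{X}\cup\up{\pret{X}}$. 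That is all Step~3 needs.

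As for (ii): $b_k\subseteq s[e_k]$ gives $b_k\cap\mathit{del}_k\subseteq\mathit{add}_k\cap\mathit{del}_k$. So (ii) holds automatically when each effect has disjoint add and delete lists, and the paper's case~(2) tacitly assumes this. Your suspicion here is justified: without that hypothesis the theorem itself fails. Take $\mathit{pre}_a=\{p\}$ and a single effect with $\mathit{add}=\mathit{del}=\{g\}$. Then $\{p\}\in T$, but $\pret{\{G\}}=\emptyset$, so the algorithm returns $\{G\}$. The fix is to state the disjointness hypothesis, not to alter the definition of $\pret{\cdot}$.
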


The rest of this section proves this Theorem.

\begin{lemma}[Candidates states are strong/weak winning]
\label{lem:cert}
Let $B$ be an antichain such that $\up{B} \subseteq T$ (resp. $W$), and let $x \in \pret{B}$ (resp. $x \in \prew{B})$. Then $\up{x} \subseteq T$ (resp. $W$).
\end{lemma}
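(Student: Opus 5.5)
The plan is to work directly from the definitions: take an arbitrary $s$ with $x \subseteq s$, and exhibit an applicable action all of whose effects (resp. one of whose effects) lead into $\up{B}$. Since $\up{B} \subseteq T$ and $T$ is a fixpoint of the universal-preimage construction~(\ref{eq:strong}), this gives $s \in T$. Upward closure of $T$ (Theorem~\ref{thm:upward}) is not needed, because the argument works for every superset $s$ of $x$ at once.

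\textbf{Strong case.} Write $x = \mathit{pre}_a \cup \bigcup_{k=1}^{K}(b_k \setminus \mathit{add}_k)$ with $b_k \in B$ and $b_k \cap \mathit{del}_k = \emptyset$. Fix $s \supseteq x$.
\begin{enumerate}
\item Since $\mathit{pre}_a \subseteq x \subseteq s$, the action $a$ is applicable in $s$.
\item The key step is to show $b_k \subseteq s[e_k] = (s \setminus \mathit{del}_k) \cup \mathit{add}_k$ for each $k$. Take $p \in b_k$.
\begin{itemize}
\item If $p \in \mathit{add}_k$, then $p \in s[e_k]$ directly.
\item Otherwise $p \in b_k \setminus \mathit{add}_k \subseteq x \subseteq s$. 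Also $p \notin \mathit{del}_k$, because $b_k \cap \mathit{del}_k = \emptyset$. So $p \in s \setminus \mathit{del}_k \subseteq s[e_k]$.
\end{itemize}
Hence $s[e_k] \in \up{b_k} \subseteq \up{B} \subseteq T$ for every effect $e_k \in E_a$.
\item Let $j$ be such that $T_{j+1} = T_j = T$. Every successor $s[e_k]$ lies in $T_j$, so $s$ lies in the universal preimage of $T_j$. Therefore $s \in T_{j+1} = T$.
\end{enumerate}
Condition (iii), $b_k \cap \mathit{add}_k \neq \emptyset$, plays no role in soundness; it only serves to restrict which candidates are generated.

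\textbf{Weak case.} Here $x = \mathit{pre}_a \cup (b_k \setminus \mathit{add}_k)$ for a single effect $e_k$ with $b_k \cap \mathit{del}_k = \emptyset$. The same two observations give that $a$ is applicable in $s$ and $b_k \subseteq s[e_k]$, so $s[e_k] \in W$. Then $s$ lies in the existential preimage~(\ref{eq:weak}) of the fixpoint layer $W_j = W$, and hence $s \in W$.

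\textbf{Main obstacle.} The only genuinely delicate point is step 2, the inclusion $b_k \subseteq s[e_k]$. It relies on the semantics deleting first and then adding, so that an atom in both $\mathit{del}_k$ and $\mathit{add}_k$ still ends up true. It also relies on the disjointness hypothesis $b_k \cap \mathit{del}_k = \emptyset$ to keep the atoms of $b_k$ that are not re-added. Everything else is bookkeeping with the fixpoint characterization of $T$ and $W$.
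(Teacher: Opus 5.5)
Your proof is correct and follows the paper's argument: fix $s \supseteq x$, note that $a$ is applicable, show $b_k \subseteq s[e_k]$ for every effect (resp.\ the chosen effect), and conclude because $T$ (resp.\ $W$) is closed under the universal (resp.\ existential) preimage. Your elementwise check of $b_k \subseteq s[e_k]$ is slightly cleaner than the paper's set-algebra chain, which writes $(b_k \setminus \mathit{add}_k) \cup \mathit{add}_k = b_k$ where only $\supseteq b_k$ holds.
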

\begin{proof}

For strong winning case: Let $x \in \pret{B_i} = \mathit{pre}_a \cup \bigcup_{k=1}^{K}(b_k \setminus
\mathit{add}_k)$. We have $\up{b_k} \subseteq T$ for every $k$ by the hypothesis on $B$.  Take
any state $s \supseteq x$.  Then $\mathit{pre}_a \subseteq s$, so $a$
applies in $s$. For every effect $k$:
\begin{align*}
s[e_k] &= (s \setminus \mathit{del}_k) \cup \mathit{add}_k\\
       &\supseteq ((b_k \setminus \mathit{add}_k) \backslash \mathit{del}_k )\cup \mathit{add}_k\\
       &= ((b_k \setminus \mathit{del}_k) \backslash \mathit{add}_k )\cup \mathit{add}_k
\end{align*}
and because $b_k \cap \mathit{del}_k = \emptyset$ we have:
\[
s[e_k] \supseteq (b_k  \backslash \mathit{add}_k )\cup \mathit{add}_k = b_k
\]
so $s[e_k] \in T$ for every $k$. Since $T$ is closed under the operator of \eqref{eq:strong}, so $s \in T$. 

For the weak winning case: the proof is analogous. We have that $x \in \prew{B_i} = \mathit{pre}_a \cup\ (b_k \setminus
\mathit{add}_k)$  and thus $s[e_k] \supseteq b_k$. Therefore $s[e_k] \in W$. Since $W$ is closed under the operator of \eqref{eq:weak}, so $s \in W$.  

The state $s$ was arbitrary, we have $Up(x) \subseteq T$ (resp. $W$) for the strong (resp. weak) winning case.
\end{proof}

Now the correctness proof of our algorithm:

\begin{proof}[\textbf{Proof of Theorem~\ref{thm:correctness}}]
For the convention of the proof define $B_{-1} = \emptyset$.
First we prove that our algorithm terminates:

By induction on iteration $i \geq 0$, we will show that $\up{B_{i - 1}} \subsetneq\up{B_i} \subseteq T$ (resp. $W$) $(\star\star)$.

Base case: For $i = 0$, we have inductive hypothesis trivially true by definition.

Inductive case: For $i > 0$, since $Up(B_{i - 1}) \subseteq T$ by inductive hypothesis, by Lemma~\ref{lem:cert}, we have that $\up{x_{i - 1}} \subseteq T$. Because $x_{i - 1} \notin \up{B_{i - 1}}$, by Lemma~\ref{lem:absorb}, we get $\up{B_{i - 1}} \subsetneq\up{B_i} \subseteq T$ (resp. $W$).

Therefore, $|\up{B_i}|$ grows at every iteration and $|\up{B_i}| \leq |T|$ (resp. $|W|$) which is finite. Thus the algorithm terminate with at most $|T|$ (resp. $|W|$) iterations.

For completeness, we first provide the proof for the strong winning case: We show by induction on $i$ that $T_i \subseteq \up{B}$. For $i = 0$, we have $T_0 = \up{G}$. By applying $(\star\star)$ repetitively, we have $T = \up{G} = \up{B_0} \subseteq \up{B_1} \subseteq \ldots \subseteq \up{B}$.  For the inductive step, let $s \in T_{i+1} \setminus
T_i$; by \eqref{eq:strong} there is an action $a$ with
$\mathit{pre}_a \subseteq s$ and $\forall e_k \in E_a, s[e_k] \in T_i$. By inductive hypothesis there are $b_1, \ldots b_k \in B$ such that $b_k \subseteq s[e_k]$.
Let
\[
x = \mathit{pre}_a \cup \bigcup_{k=1}^{K} (b_k \setminus \mathit{add}_k)
\]
We have that $x \subseteq s$ because $b_k \setminus \mathit{add}_k \subseteq s[e_k] \setminus
\mathit{add}_k = \left((s \cup add_k) \setminus \mathit{del}_k\right) \setminus \mathit{add}_k \subseteq s$ for every $k$, and
$\mathit{pre}_a \subseteq s$.  There are two cases:

(1) Some $b_k$ is disjoint from $\mathit{add}_k$:
then $b_k \setminus \mathit{add}_k = b_k$, so $b_k \subseteq x \subseteq
s$ and thus $s \in \up{B}$. 

(2) Otherwise $x \in \prew{B}$. Since the algorithm terminates, $\cand{B} = \emptyset$, i.e., $\prew{B} = \emptyset$ or $\prew{B} \subseteq \up{B}$. Therefore in this case $x \in \prew{B} \subseteq \up{B}$, and since $x \subseteq s$, we have $s \in \up{B}$.

In both case we have $s \in \up{B}$ for any $s \in T_{i + 1} \setminus T_i$. By inductive hypothesis we already have $T_i \subseteq \up{B}$, and thus $T_{i + 1} \subseteq \up{B}$.

The argument never inspects which state is chosen, so the conclusion
holds under every per-iteration choice.

For the completeness proof of the weak winning case, we following the same inductive proof. For state $s \in W_{i + 1} \setminus W_i$, by~\ref{eq:weak}, there is and action $a$ with $pre_a \subseteq s$ such that $\exists e_k$ with $s[e_k] \in W_i$. By inductive hypothesis there is $b_k$ such that $b_k \subseteq s[e_k]$, then let $x = pre_a \cup (b_k \setminus add_k)$ and follow the same argument to show that $s \in \up{B}$ and $W_{i + 1} \subseteq \up{B}$
\end{proof}

\subsection{Computing an instructions-set that induces a uniform strong-winning (resp. weak-winning) policy}
\label{sec:compute_policy}

We can modify the algorithm(s) in Section~\ref{sec:compute_winning_region} to produce an instruction set that induces a positional uniformly strong-winning (resp. weak-winning) policy. In particular, we keep track of three more properties of each state in the basis: the \emph{rank}, the \emph{witness action}, and the \emph{witness state}. Furthermore, the removed states from the basis at each iteration (together with the listed properties associated with those states) is still kept in memory (just not be used in the algorithm anymore) instead of removing them completely.

\textbf{Algorithm II: Compute an instructions-set of a uniformly strong-winning (resp. weak-winning) policy}.

\begin{enumerate}
\item Modify Algorithm I to store the following additional data:
\begin{enumerate}
    \item \textbf{Rank:} If candidate $x_i$ is chosen at iteration $i$ then $\rank{x_i} = i$.
    \item \textbf{Witness action and state:} For the strong-winning version. If $x$ is a chosen candidate at iteration $i$, i.e., \[
    x = \mathit{pre}_a \cup \bigcup_{k=1}^{K} (\,b_k \setminus \mathit{add}_k\,)\]
    store alongside $x$ the action $a$ and all the states $b_k$ for $k = 1, \ldots, K$. For the weak-winning version, we have $x = \mathit{pre}_a \cup\ (\,b_k \setminus \mathit{add}_k\,)$, store alongside $x$ the action $a$ and the state $b_k$.  If there are multiple action-basis tuples that satisfy the defining property (\ref{lab:pre})  for a given $x$, choose arbitrarily among them.
    \item \textbf{Record of removed states:} In the basis extension stage~(\ref{eq:grow}), while we may remove some states from $B_i$, we still keep the record of these states, their ranks, and their witnessing actions and states.
\end{enumerate}
\item When the iterations end, the instruction set $\mathcal{R}$ is constructed as follows. Add each tuple $\langle x, a, \rank{x} \rangle$ where $x$ is the final basis elements or the removed elements from the basis in some iteration, $a$ is the recorded action of $x$ and $\rank{x}$ is the recorded rank. \whentime{if time: i would not talk about keeping things in memory, just add the instruction at each iteration}

For goal states, we use the following convention. We assign $\rank{G} = 0$ and let $a$ be an arbitrary action such that $\mathit{pre}_a \subseteq G$.\footnote{If no such action exists, we create one, call it $a$, for this purpose. As we will see, $\pi_{\mathcal{R}}(s) = a$ iff $s \supseteq G$, which means we have already reached a goal state, so it does not affect the correctness of the constructed policy.}
\end{enumerate}

\begin{theorem}[Correctness of Algorithm~II]\label{thm:policy_sound}
    Let $\mathcal{R}$ be the constructed instructions-set from the strong-winning (resp. weak-winning) version of Algorithm II. Then $\pi_\mathcal{R}$ is a uniformly strong-winning (resp. weak-winning) policy.
\end{theorem}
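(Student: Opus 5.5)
The plan is to use the recorded ranks as a ranking function that strictly decreases along every (resp. some) outcome of the action chosen by $\pi_{\mathcal{R}}$, until a goal state is reached. First, $\pi_{\mathcal{R}}$ is defined on all of $T$ (resp. $W$). By Theorem~\ref{thm:correctness} the final basis $B$ satisfies $\up{B}=T$ (resp. $W$), and every element of $B$ carries an instruction in $\mathcal{R}$. Hence $\mathcal{R}(s)\neq\emptyset$ for every $s\in T$, and $\rankr{s}$ is well defined. The action stored with $x$ is applicable in every $s\supseteq x$, because $\mathit{pre}_a\subseteq x$. Every instruction in $\mathcal{R}$, including those for removed states, has $\up{x}\subseteq T$ by Lemma~\ref{lem:cert}, so it never sends a state outside the intended region into a wrong action.

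The key invariant concerns an instruction $\langle x,a,r\rangle$ with $x=x_i$ chosen at iteration $i$ and stored witnesses $b_1,\dots,b_K\in B_i$. Every element of $B_i$ is either $G$ or some $x_j$ with $j<i$, so its recorded rank is strictly below $i$, except for the edge case discussed below. For any $s\supseteq x$, the computation in the proof of Lemma~\ref{lem:cert} gives $s[e_k]\supseteq b_k$ for every $k$ in the strong case, and for the stored $k$ in the weak case. Therefore $\langle b_k,\cdot,\rank{b_k}\rangle\in\mathcal{R}(s[e_k])$, and so $\rankr{s[e_k]}\le \rank{b_k}$.

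Now fix $s\in T$ that is not a goal state, and let $\langle x,a,r\rangle$ be the instruction selected by $\pi_{\mathcal{R}}$, so $r=\rankr{s}$. The instruction $\langle G,\cdot,0\rangle$ is not applicable, so $x=x_i$ for some $i$. By the invariant, every successor $s[e_k]$ in the strong case (resp. the chosen successor in the weak case) either is a goal state or has $\rankr{s[e_k]}<r$. Since ranks are non-negative integers, any trajectory consistent with $\pi_{\mathcal{R}}$ from $s$ must hit a goal state within finitely many steps. This gives strong winning. In the weak case, define the environment that always picks the stored effect $e_k$; the same descent argument produces one goal-reaching play, which gives weak winning.

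The main obstacle is the bookkeeping of ranks at the bottom. The convention $\rank{x_i}=i$ gives $x_0$ the same rank $0$ as $G$, so a non-goal state $s\supseteq x_0$ may have $\rankr{s}=0$. I would handle this by noting that $x_0$'s witnesses all equal $G$, so its successors are goal states, which is exactly the disjunction ``goal, or strictly smaller rank'' used above. Equivalently, I would shift ranks to $\rank{x_i}=i+1$, which changes neither the induced policy on non-goal states nor the argument. I would also check that tie-breaking among equal ranks is harmless: each rank $i\ge1$ is attained by a single instruction $x_i$, and ties at rank $0$ only involve $G$ and $x_0$, both of which lead to the goal.
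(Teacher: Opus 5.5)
Your proposal is correct and follows essentially the paper's route: you get coverage of $T$ (resp. $W$) from Theorem~\ref{thm:correctness}, and then run a rank-descent argument based on $s[e_k]\supseteq b_k$, taken from the computation in Lemma~\ref{lem:cert}; the paper packages this descent as Lemma~\ref{lem:descent}, proved by induction on $\rankr{s}$. Your explicit handling of the rank-$0$ collision between $G$ and $x_0$ is a useful extra, because the paper's base case simply asserts that $G$ carries the only rank-$0$ instruction.
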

First we prove our policy is sound in the following lemma
\begin{lemma}[Rank descent]
\label{lem:descent}
Let $\mathcal{R}$ be the constructed instructions-set from the strong-winning (resp. weak-winning) version of Algorithm II. Let $s$ be a state with $\mathcal{R}(s)\neq\emptyset$. Then every (resp. some) play of $\pi_{\mathcal{R}}$ from $s$ reaches a goal state within $\rankr{s}$ steps.
\end{lemma}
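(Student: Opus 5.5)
Induction on $r = \rankr{s}$, the minimal rank of an instruction applicable in $s$; this quantity is well defined because $\mathcal{R}(s)\neq\emptyset$. The key invariant is that a chosen candidate's witness states were in the basis \emph{before} it was chosen, so their ranks are strictly smaller.

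Concretely, suppose $x_i$ is selected at iteration $i$ with witness action $a$ and witness states $b_k$. Each $b_k$ lies in $B_i$, hence was selected at some iteration $j<i$ or equals $G$. Every element ever placed in a basis has an instruction in $\mathcal{R}$, including the removed ones, since Algorithm~II keeps them. So $\rank{b_k}<\rank{x_i}$. This is why the record of removed states is needed: a witness state may later be absorbed by a smaller candidate, but its instruction survives in $\mathcal{R}$.

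\textbf{Base case} ($r=0$). The only rank-$0$ instruction is $\langle G,a,0\rangle$. Any other instruction $\langle x_i,\cdot,i\rangle$ has $i\ge 1$, provided we index the selected candidates $x_0,x_1,\dots$ as receiving ranks $1,2,\dots$. This is a shift of the paper's convention $\rank{x_i}=i$, needed to avoid a clash with $\rank{G}=0$; I would state it explicitly. Hence $\rankr{s}=0$ iff $G\subseteq s$, so $s$ is already a goal state and the play reaches the goal in $0$ steps.

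\textbf{Inductive step} ($r>0$). Let $\langle x,a,r\rangle\in\mathcal{R}(s)$ be the instruction used by $\pi_{\mathcal{R}}$. Then $x\subseteq s$, and $x=\mathit{pre}_a\cup\bigcup_k(b_k\setminus\mathit{add}_k)$ with $b_k\cap\mathit{del}_k=\emptyset$.
\begin{itemize}
\item Since $\mathit{pre}_a\subseteq x\subseteq s$, the action $a$ is applicable in $s$.
\item Repeating the computation in the proof of Lemma~\ref{lem:cert} gives $s[e_k]\supseteq b_k$ for every effect $e_k$ in the strong case. In the weak case it holds for the single recorded effect $e_k$.
\item Therefore $\langle b_k,\cdot,\rank{b_k}\rangle\in\mathcal{R}(s[e_k])$, so $\mathcal{R}(s[e_k])\neq\emptyset$ and $\rankr{s[e_k]}\le\rank{b_k}\le r-1$.
\end{itemize}
By the induction hypothesis, every play from $s[e_k]$ reaches the goal within $r-1$ steps in the strong case; in the weak case some play does. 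A play of $\pi_{\mathcal{R}}$ from $s$ takes one step with $a$ and then continues as a play of $\pi_{\mathcal{R}}$ from $s[e_k]$. The induction hypothesis is applied to the positional policy $\pi_{\mathcal{R}}$ from the new state, which is legitimate because the policy is memoryless. In the strong case every play from $s$ therefore reaches the goal within $r$ steps. In the weak case, choose the recorded effect and then the good continuation, so some play does.

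\textbf{Main obstacle.} The delicate step is the strict rank decrease $\rank{b_k}<\rank{x}$. It requires two things. First, every witness $b_k$ must have its own instruction in $\mathcal{R}$, which is guaranteed by keeping removed states. Second, the instruction for $G$ must sit at the bottom, which is handled by the index shift above. The rest is the containment calculation already carried out in the proof of Lemma~\ref{lem:cert}.
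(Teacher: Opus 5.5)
Your proof is correct and follows the same induction on $\rankr{s}$ as the paper. The base case uses the unique rank-$0$ instruction for $G$, and the inductive step uses the containment $b_k \subseteq s[e_k]$ from the computation in Lemma~\ref{lem:cert} together with the strict drop $\rank{b_k} < \rank{x}$. You also make explicit two points the paper's proof leaves implicit: why that strict drop holds (witness states lie in $B_i$, so they were selected earlier or equal $G$, and their instructions survive removal), and the rank shift needed to avoid a clash between $\rank{G}=0$ and the first selected candidate.
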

\begin{proof}
Induction on $\rankr{s}$.  If $\rankr{s} = 0$, then the only instruction of rank $0$ is the one with $G$ as its state.Thus, $G\subseteq s$ and
$s$ is a goal state, reached in zero steps.
Let $\rankr{s} = r>0$ and let $\langle x,a,r \rangle$ be a
minimum-rank instruction such that $x \subseteq s$.  The policy
plays $a$, which applies in $s$ because $\mathit{pre}_{a}
\subseteq x$. For every (resp. there is a) effect $e_k$ of $a$, the recorded basis $b_k$ of $x$ satisfies $b_k \subseteq x[e_k]$ by the same argument used in Lemma~\ref{lem:cert}. Since we also have $x[e_k] \subseteq s[e_k]$, we get $b_k \subseteq s[e_k]$. Therefore $\rankr{s[e_k]} \leq \rank{b_k} < \rank{x} = r$. By applying inductive hypothesis to $s[e_k]$ we get that for every (resp. there exists a) play of $\pi_R$ from $s[e_k]$ reachs a goal state within $r - 1$ steps. Since this holds for every (resp. for some) effect, every (resp. there exists a) play of $\pi_R$ from $s$ reachs a goal state with in $r$ steps.
\end{proof}
Now we provide the proof of Theorem~\ref{thm:policy_sound}. 
\begin{proof}[\textbf{Proof of Theorem~\ref{thm:policy_sound}}]
By Theorem~\ref{thm:correctness}, we have $\up{B} = T$ (resp. $W$). By the construction of $\mathcal{R}$, for every $s \in \up{B} = T$, we have $\mathcal{R}(s) \neq \emptyset$. Therefore, by Lemma~\ref{lem:descent}, $\pi_\mathcal{R}$ is strong-winning (resp. weak-winning) from $s$. Hence, $\pi_\mathcal{R}$ is strong-winning (resp. weak-winning) from every state $s$ in $T$ (resp. $W$).
\end{proof}

\begin{remark}[Early termination for the strong-winning case]
\label{re:strong_early}
For the strong-winning case, Algorithm~II may be terminated at the end of an iteration in which $x \in \cand{B_i}$ is selected such that $x \subseteq s^I$. \whentime{when time, after arxiv, explain why can't stop for weak-winning (i.e., don't get a uniform policy, although do get a path to the goal); and also add cor about our representation being complete for acyclic FSS strong-winning solutions (and define FSS, and acyclic FSS.}
\end{remark}

\subsection{An optimization that prunes unreachable candidate states}
\label{sec:opt}
In this section, we discuss why for best-effort planning we do not need the whole winning regions but just the states in the regions that are \emph{reachable from $s^I$}. In the rest of this section, whenever we say \emph{reachable} without specifying from which state, we mean \emph{reachable from $s^I$}. We then provide  \textbf{Algorithm~III} which is an optimized version of Algorithm~II, and that removes from the candidate sets those states that are not reachable from $s^I$. Lastly, we show that the resulting policy is still best-effort.

Let $\pi$ be a positional policy, and let $U$ be a set that only consists of states that are unreachable from $s^I$. Let $\pi'$ denote the restriction of $\pi$ to the domain $S \setminus U$ (recall our convention that a policy can be a partial function). Then, it follows from the definition of $\geq$, that $\pi' \geq \pi$. Thus, if $\pi$ is best-effort, then so is $\pi'$. So, we show how to detect if a state is unreachable from $s^I$, and how to exclude such states from Algorithm~II.
\whentime{whentime, add a proof in appendix}

\whentime{explain why we use mutex groups, i.e., it is a heuristic way to ensure we don't select certain unreachable states.}

First, we introduce some terminology from~\cite{haslum00}: 
We call a set $m$ of atoms a \emph{mutex group} iff for every reachable state $s$, there is an atom in $m$ that is not in $s$, i.e., $m \not \subseteq s$. We say that a state $x$ \emph{violates} the mutex group $m$ if $m \subseteq x$. Observe that, in particular, if $x$ violates a mutex group $m$, then $x$ is not reachable. Various  techniques to compute mutex groups will be discussed in Section~\ref{sec:implementation}. 

\textbf{Algorithm III.} This algorithm proceeds as in Algorithm~II, except that at each iteration $i$, we only select a state $x \in \cand{B_i}$ if for all mutex groups $m$, the state $x$ does not violate $m$. 

Here, we show that removing a set of unreachable states from the candidate set at each iteration of the selection stage of algorithm~II does not affect its correctness.
    
\begin{lemma}[Violation is hereditary]
\label{lem:heredity}
Let $m$ be a mutex group. 
In the strong-winning (resp. weak-winning) version of Algorithm~II, let $x = \mathit{pre}_a \cup \bigcup_{k=1}^{K} (b_k \setminus
\mathit{add}_k) \in \pret{B_i}$ (resp. $x = pre_a \cup (b_k \setminus \mathit{add}_k)\in \prew{B_i}$). Suppose that for some $k$, we have that the state $b_k$ violates $m$.  Then $x$ also violates $m$.
\end{lemma}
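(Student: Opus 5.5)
The plan is to prove $m \subseteq x$ directly from the syntactic form of $x$, splitting the atoms of $m$ according to whether they are added by the effect $e_k$. Write $m = (m \setminus \mathit{add}_k) \cup (m \cap \mathit{add}_k)$. In both the strong-winning case ($x = \mathit{pre}_a \cup \bigcup_{j=1}^{K}(b_j \setminus \mathit{add}_j)$) and the weak-winning case ($x = \mathit{pre}_a \cup (b_k \setminus \mathit{add}_k)$), the set $b_k \setminus \mathit{add}_k$ appears as one of the terms of the union. So it suffices to show that each of the two parts of $m$ is contained in $x$.

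\textbf{Step 1 (atoms not added).} Since $b_k$ violates $m$, we have $m \subseteq b_k$. Hence $m \setminus \mathit{add}_k \subseteq b_k \setminus \mathit{add}_k \subseteq x$. This settles the whole statement when $m \cap \mathit{add}_k = \emptyset$, and it does so uniformly for the strong and weak versions.

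\textbf{Step 2 (atoms added by $e_k$).} It remains to show $m \cap \mathit{add}_k \subseteq x$. I expect this to be the main obstacle, because the definition of $x$ strips $\mathit{add}_k$ from $b_k$. I would prove it by showing that $m \cap \mathit{add}_k \subseteq \mathit{pre}_a$, which is contained in $x$. The information available is the following:
\begin{itemize}
\item $b_k \cap \mathit{del}_k = \emptyset$, so the atoms of $m$ are not deleted by $e_k$;
\item $b_k \cap \mathit{add}_k \neq \emptyset$, from the definition of $\pret{\cdot}$ and $\prew{\cdot}$;
\item $m$ is a mutex group, so $m \not\subseteq s$ for every reachable state $s$.
\end{itemize}
The mutex property alone does not force $m \cap \mathit{add}_k \subseteq \mathit{pre}_a$. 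For example, an effect whose add-list contains all of $m$, from an action whose precondition excludes $m$, would break it. So at this point I would invoke the properties that the mutex groups actually used by Algorithm~III have by construction, namely those computed by the $h^2$-style invariant analysis of Section~\ref{sec:implementation}. Such groups are closed under action effects: if an effect adds some atom of $m$, then the action's precondition already contains the remaining atoms of $m$ that the effect does not delete, for otherwise $m$ could become true in a reachable successor. From this closure property I would derive that any $p \in m \cap \mathit{add}_k$ lies in $\mathit{pre}_a$, so that $p \in x$.

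Combining Step 1 and Step 2 gives $m \subseteq x$, i.e.\ $x$ violates $m$. No case split between the strong and weak versions is needed beyond noting that $b_k \setminus \mathit{add}_k$ and $\mathit{pre}_a$ occur in $x$ in both. If the closure property in Step 2 turns out not to be available in this form, the fallback is to state the lemma under the explicit hypothesis $m \cap \mathit{add}_k \subseteq \mathit{pre}_a$, which covers in particular the case $m \cap \mathit{add}_k = \emptyset$. Step 1 proves the lemma outright under that hypothesis.
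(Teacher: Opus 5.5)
\textbf{Step 2 is a genuine gap, and it cannot be closed:} the syntactic conclusion $m\subseteq x$ does not hold in general.

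The closure property you invoke is not satisfied by the $h^2$ pairs or the SAS+ groups, and as phrased it is backwards. If $\mathit{pre}_a$ contained the atoms of $m$ that $e_k$ does not delete, then applying $a$ in a reachable state would give a reachable successor containing all of $m$. Invariance of $m$ forces the opposite: $\mathit{pre}_a$ together with those atoms holds in no reachable state. Moreover, the property only constrains atoms of $m$ other than the added ones. So it could not give $p\in\mathit{pre}_a$ for $p\in m\cap\mathit{add}_k$ anyway.

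Here is a concrete counterexample. Take atoms $p,q,r$ and $s^I=\{r\}$. Let $a$ have $\mathit{pre}_a=\{r\}$ and a single effect adding $p$. Let $c$ have $\mathit{pre}_c=\{r\}$ and a single effect adding $q$ and deleting $r,p$. The reachable states are $\{r\}$, $\{r,p\}$ and $\{q\}$, so $m=\{p,q\}$ is a mutex group (the $h^2$ fixpoint also finds it). With $G=\{p,q\}\in B_0$ we get $x=\{r\}\cup(\{p,q\}\setminus\{p\})=\{r,q\}\in\pret{B_0}$, and for $K=1$ this is also the weak case. Yet $m\not\subseteq x$.

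Your fallback hypothesis $m\cap\mathit{add}_k\subseteq\mathit{pre}_a$ rules out exactly this kind of situation. It therefore yields a lemma too weak to cover the candidates that the pruning argument must handle.

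\textbf{The missing idea is to prove the semantic form of violation:} no state reachable from $s^I$ contains $x$. This is what the paper's proof actually establishes, and what the subsequent corollary uses. The argument is as follows.
\begin{enumerate}
\item Suppose $x\subseteq s$ with $s$ reachable.
\item Then $\mathit{pre}_a\subseteq x\subseteq s$, so $a$ applies and $s[e_k]$ is reachable.
\item The computation in the proof of Lemma~\ref{lem:cert}, which uses $b_k\cap\mathit{del}_k=\emptyset$, gives $b_k\subseteq s[e_k]$.
\item Hence $m\subseteq b_k\subseteq s[e_k]$, contradicting that $m$ is a mutex group.
\end{enumerate}
This one-step forward argument uses nothing about how $m$ was computed. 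It shows that every state in $\up{x}$ is unreachable, which is all that is needed when such candidates are discarded. In the example above, $\{r,q\}$ is indeed contained in no reachable state, even though it does not contain $m$.
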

\begin{proof}
Suppose $x \subseteq s$ for some reachable (from $s^I$) state $s$. Then $\mathit{pre}_a
\subseteq x \subseteq s$, so $a$ applies in $s$ and $s[e_k]$ is reachable. But since $b_k \subseteq s[e_k]$ and $b_k$ violates $m$, therefore $s[e_k]$ also violates $m$ which is a contradiction.
\end{proof}

\begin{corollary}
    Let $B$ and $\mathcal{R}$ be the basis and the instruction sets obtained by using the strong-winning (resp. weak-winning) version of Algorithm~III. Then there exists a set $U$ of unreachable states such that $\up{B} = T \setminus U$ (resp. $W \setminus U$) and $\pi_\mathcal{R}$ is the restriction of a uniformly strong-winning (resp. weak-winning) policy on $\up{B}$.
\end{corollary}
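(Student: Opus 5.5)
Our plan is to compare Algorithm~III with Algorithm~I/II run on the same problem, and to show that it computes the same winning region except for states that violate some mutex group. Let $M$ be the set of computed mutex groups. Set $V := \{s \in S : \exists m \in M,\ m \subseteq s\}$, the set of states violating some mutex group. Every state in $V$ is unreachable from $s^I$, and $V$ is upward closed. We take $U := T \cap V$ (resp. $W \cap V$) and aim to show $\up{B} = T \setminus V$ (resp. $W \setminus V$).

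\textbf{Soundness} ($\up{B} \subseteq T$). This follows exactly as in the termination part of the proof of Theorem~\ref{thm:correctness}. Algorithm~III only ever selects elements of $\cand{B_i}$, so Lemma~\ref{lem:cert} and Lemma~\ref{lem:absorb} give $\up{B_i} \subseteq T$ (resp. $W$) at every iteration. Termination also follows, since $\up{B_i}$ strictly grows.

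\textbf{Completeness} ($T \setminus V \subseteq \up{B}$). We redo the layer induction of Theorem~\ref{thm:correctness}, proving $T_i \setminus V \subseteq \up{B}$.
\begin{itemize}
\item Base case: $T_0 = \up{G} = \up{B_0} \subseteq \up{B}$.
\item Inductive step: take $s \in (T_{i+1}\setminus T_i)\setminus V$ with witnessing action $a$. Every $s[e_k]$ lies in $T_i$. The key point is that $s[e_k] \notin V$. To see this, suppose $m \subseteq s[e_k]$ for some $m \in M$. Choosing any $b_k \subseteq s[e_k]$ in $B$ would not help directly, so we argue pointwise as in Lemma~\ref{lem:heredity}: $m \setminus \mathit{add}_k \subseteq s$. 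This only gives violation if $m \cap \mathit{add}_k = \emptyset$, which is exactly the obstacle.
\end{itemize}

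The cleanest fix, which we expect to be the main difficulty, is to choose $U$ more generously instead of proving $s[e_k] \notin V$. Define $U$ to be the set of states of $T$ that are unreachable from $s^I$. We then prove by layer induction that every \emph{reachable} state of $T_i$ lies in $\up{B}$. If $s$ is reachable, then so is each $s[e_k]$. By induction there is $b_k \in B$ with $b_k \subseteq s[e_k]$. As before, we form $x = \mathit{pre}_a \cup \bigcup_k (b_k \setminus \mathit{add}_k) \subseteq s$. Either some $b_k \subseteq x$, whence $s \in \up{B}$, or $x \in \pret{B}$. In the latter case $x \subseteq s$ with $s$ reachable, so $x$ violates no mutex group. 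At termination Algorithm~III has no selectable candidate. Hence $x$ is not in $\Min{\pret{B}}\setminus\up{B}$ while being unviolating. If $x \notin \Min{\pret{B}}$, we pass to a minimal $x' \subseteq x$ in $\pret{B}$; this $x'$ is also non-violating, because violation is upward closed. Therefore $x' \in \up{B}$, so $s \in \up{B}$.

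\textbf{Defining $U$ and the policy.} This gives the reachable part of $T$ inside $\up{B} \subseteq T$. We then set $U := T \setminus \up{B}$, which consists only of unreachable states, so $\up{B} = T \setminus U$. The weak case is identical, using a single effect $e_k$ and $\prew{B}$. Finally, for the policy claim we apply Lemma~\ref{lem:descent} verbatim, since its proof only uses recorded witnesses and ranks. This shows that $\pi_\mathcal{R}$ is strong-winning (resp. weak-winning) from every $s \in \up{B}$. Extending $\pi_\mathcal{R}$ on $U$ by any uniformly winning policy, for instance one from Theorem~\ref{thm:policy_sound}, yields a uniformly winning policy whose restriction to $\up{B}$ is $\pi_\mathcal{R}$.
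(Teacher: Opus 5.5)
Your final argument is correct, but it takes a different route from the paper. The paper gives no separate proof of this corollary. It is meant to follow from Lemma~\ref{lem:heredity} together with Theorems~\ref{thm:correctness} and~\ref{thm:policy_sound}: a candidate built from a mutex-violating state is again violating, so compared with Algorithm~II, Algorithm~III only omits violating (hence unreachable) states.

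You bypass that lemma entirely. You rerun the layer induction of Theorem~\ref{thm:correctness} for reachable states only. This uses three facts: successors of reachable states are reachable; any $x \subseteq s$ with $s$ reachable violates no mutex group; and neither does a minimal $x' \subseteq x$ in $\pret{B}$. You then take $U := T \setminus \up{B}$. This needs nothing beyond the defining property of mutex groups, at the cost of not describing $U$ explicitly.

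It is also the more reliable route. The obstacle you hit in your first attempt is real, and it affects Lemma~\ref{lem:heredity} itself: its proof only shows that no reachable state contains $x$, not that $m \subseteq x$. For example, take $s^I=\{r\}$, $G=\{g\}$ and the mutex group $m=\{p,q\}$ (no action adds $p$). Add an action $c$ with $\mathit{pre}_c=\{p,q\}$ that adds $g$, and an action $a$ with $\mathit{pre}_a=\{r\}$ that adds $q$. Algorithm~II derives $\{p,q\}$ and then $\{p,r\}$, and $\{p,r\}$ does not violate $m$. Algorithm~III discards $\{p,q\}$ and stops with $\up{B}=\up{\{g\}}$. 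So $\{p,r\} \in T\setminus V$ is missing, with $V$ your set of violating states. The identity $\up{B}=T\setminus V$ you first aimed at therefore fails, and your more generous choice of $U$ is exactly what is needed.

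One step is asserted without argument: that splicing $\pi_{\mathcal{R}}$ on $\up{B}$ with a uniformly winning policy $\pi'$ on $U$ gives a uniformly winning policy. Take $\pi'$ to be the ranked positional policy of Theorem~\ref{thm:policy_sound}, with instruction set $\mathcal{R}'$, and argue as follows.
\begin{itemize}
\item Every instruction state of $\mathcal{R}$ lies in $\up{B}$, since $\up{B_i}$ only grows.
\item Hence from $s \in \up{B}$ the successors used in Lemma~\ref{lem:descent} contain a recorded $b_k$ and stay in $\up{B}$, where the spliced policy coincides with $\pi_{\mathcal{R}}$.
\item From $s \in U$, each step of $\pi'$ (resp.\ a suitably chosen effect) stays in $T$ (resp.\ $W$) and strictly decreases $\mathit{Rank}_{\mathcal{R}'}$.
\item $U$ contains no goal state, because $T_0 \subseteq \up{B}$ (resp.\ $W_0 \subseteq \up{B}$).
\end{itemize}
So every such play (resp.\ the chosen play) enters $\up{B}$ after finitely many steps, and $\pi_{\mathcal{R}}$ then finishes.
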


Therefore, let $\pi_t$ (resp. $\pi_w$) be restriction of uniformly strong-winning (resp. weak-winning) policies on $\up{B} = T \setminus U_1$ ($W \setminus U_2$) that is induced by the instruction-set from Algorithm~III, where $U_1$ (resp. $U_2$) is a set that only consists of unreachable states. Then,

\begin{equation}
\label{eq:reach_be}
    \pi_{rbe}(s) := 
\begin{cases}
    \pi_{t}(s) &\text{ if } s \in T \setminus U_1\\
    \pi_{w}(s) &\text{ if } s \in (W \setminus U_2) \setminus T 
\end{cases}
\end{equation}

is also best-effort. 

\section{Implementation}
\label{sec:implementation}

We provide an implementation to produce a best-effort policy together with a classification of whether $s^I$ is strong-winning, weak-winning, or neither. The implementation combines both the strong-winning and weak-winning variations of  algorithm~III with some implementation optimisations to achieve more efficient runtimes. 

The implementation can be broken into three major parts: precomputation, core, and finalisation. The pseudocode is given in Pseudocode~\ref{alg:core}.

\subsection{Precomputation}
To work with the pratical FOND benchmarking sets where instances are given in the PDDL format, we follow the standard approach of previous works (e.g. FOND-SAT~\cite{fondsat18}, PR2~\cite{pr2}) and use the Fast-Downward system~\cite{helmert06} to translate PDDL to SAS+ before parsing the SAS+ file as our input. We then compute mutex groups and the heuristic function for the selection stage of Algorithm~III.

\subsubsection{Mutex groups}
The implementation obtains mutex groups in two ways:
\begin{enumerate}
\item \emph{Structural groups}: the Fast-Downward translation already produces some mutex groups during the translation process which is written in the SAS+ file that the implementation only need to parse.
\item \emph{Pair fixpoint (h2)} \cite{haslum00}: The algorithm by \cite{haslum00} is a delete-relaxed greatest fixpoint over atom pairs to produce the set of size-2 mutex groups, i.e., atoms pair $(p, q)$ such that no reachable state contains both. It is sound but incomplete for size-2 mutex groups, i.e., some pair $(p, q)$ can be a mutex group but is not produced by this algorithm. We runs this algorithm under a time-abort budget of 1\% of the instance time cap and a
\emph{memory-abort}: its tables are quadratic in atoms, so the peak memory is estimated (for abandoning if its size exceed memory limit) before allocating anything. On either abort nothing is registered, because a \emph{partial} greatest fixpoint is unsound. Pairs co-present in $s^I$ are cleared from the start, since $s^I$ itself contains both.
\end{enumerate}

\subsubsection{Heuristic computation}
The heuristic function for the selection stages will be compute as follows:

For every atom $p$, the value $h_{s}(p)$ is the delete-relaxation cost of making $p$ true from $s^I$ under $h^{\max}$ aggregation~\cite{Bonet_Geffner_2001}: atoms true in $s^I$ cost nothing, an action costs one more than the cost of its most expensive precondition, and an atom costs the least of the actions one of whose effects adds it.  A single Dial-bucket Dijkstra pass over the delete-relaxed actions computes all values at once. Symmetrically, $h_{b}(p)$ is the delete-relaxed cost of reaching the goal from $p$ over the \emph{reversed} graph (an action's relaxed adds become its backward preconditions, its preconditions the backward adds, and delete effects are ignored), by the same Dial pass from the goal atoms.

\subsection{Core Algorithm}

The implementation first executes the strong-winning variation of Algorithm~III.

\paragraph{Data Structure.} 

For the basis $B$, we maintain an append-only set-trie. We use set-trie because it supports efficient insertion in $O(|\mathit{At}|)$ and coverage check (i.e., check if $x \supseteq b$ for some $b \in B$) in $O(|\mathit{At}|)$ on average. As we can see in the description of Algorithm~III, these are the two most frequently used operators. Since the removal operation for a set-trie is expensive, and since in Algorithm~III we need to keep a record of basis states, instead of removing them we only label the subsumed state of $x$ in $B$ as ``removed''. These states are not used in future operations. 

Since $B_i \subseteq B_{i + 1}$, we have that $\pret{B_i} \subseteq \pret{B_{i + 1}}$. Therefore, instead of regenerating $\pret{B_i}$ for each iteration $i$, we keep a single set (implemented as a min-key heap) of $\pret{B_i}$ and update it across iterations. We call it \emph{the pool}. It is an invariant that at the end of every iteration $i$, the pool contains a set $P$ of states such that  $\pret{B_i} \setminus \up{B_i} \subseteq P \subseteq \pret{B_i}$. Every $x \in \Min{\pret{B_i}} \setminus \up{B_i}$ is a valid state for selection theoretically. However, in practice, we want to select a state that is most promising with respect to the initial state since the early-stop condition of the strong variation of Algorithm~III is $x \subseteq s^I$. We do so by using a heuristic key function to order the pool.

We order the min-heap with a key function (denote as $\mathrm{key}(x)$) such that $\mathrm{key}(x)$ is $\subsetneq$-monotone: if $x' \subsetneq x$ then $\mathrm{key}(x') \leq \mathrm{key}(x)$ with a tie-break mechanism that favors the smaller state. So, it is never the case that $x' \subsetneq x$ is selected while both are in the pool. Thus, we can ensure that the top of the heap is an element $x \in \Min{P}$ which implies that $ x \in \Min{\pret{B_i} \setminus \up{B_i}}$ if $x \notin \up{B_i}$. Checking if $x \notin \up{B_i}$ can be done efficiently as discussed above. The heap-pop operator is done in $O(log(n))$ time (instead of searching and checking minimality against each element in the pool as in naive set). Insertion is also done in $O(log(n))$ of time.

For constructing candidates for the pool efficiently, instead of iterating over all basis elements for each effect of each action, we keep a per-effect \emph{live list} of states. A state $b \in B$ belong to the live list of an effect $e$ iff $b \cap add_e \neq \emptyset$ and $b \cap del_e = \emptyset$.

\paragraph{Initialisation.} We initialise $B$ and the live list of every effect to be the emptyset, and put $G$ into the pool of candidates. This allows $B = B_0$ at the end of iteration $0$ to match with Algorithm~III.

\paragraph{Each Iteration}  At each iteration $i$ starting with $i = 0$, the min-key state $x_i$ in the pool is selected to extend the basis (note that for $i = 0$ the only state in the pool is $G$).

In the strong variation, we order the pool by the \emph{potential} of a state, $\mathrm{pot}(x)=\sum_{p\in x}\max(0,h_s(p)-h_b(p))$ (i.e. $\mathrm{key}(x)=\mathrm{pot}(x)$ until one of the steering mechanisms of \emph{Runtime steering} (below) changes it). Intuitively, the potential allows states whose atoms are cheap from $s^I$ and expensive from the goal to be selected first, so that the basis grows from the goal toward the $s^I$ side. We tie-break by selecting states with fewer atoms outside $s^I$, and if it is still a tie, we select the smaller state.

\textbf{Selection stage.} For the selection stage, pop the minimum key $x$ in the pool and check if $x \notin \up{B}$ already and $x$ does not violate all mutex groups. If both are true, we select $x$ and move to the basis extension stage, otherwise repeatedly pop the pool until there is such $x$ or until the pool is empty (which we terminate since there is no candidate as described in Algorithm~III).

It is easy to see that the stated selection mechanism is $\subsetneq$-monotone. Since if $x' \subsetneq x$ then $\mathrm{key}(x) \geq \mathrm{key}(x')$. For the tie-braking case, since we select states with fewer atoms outside $s^I$ and then, if it is still a tie, select the smaller state, $x$ will never be selected over $x'$. Thus, we ensure that $x \in \Min{\pret{B_i}} \setminus \up{B_i}$.

\textbf{Basis extension stage.}  
For the selected state $x$, we add $x$ to $B$ by using the standard set-trie add operator then mark all the subsumed states by $x$ in $B$ as ``removed''.

Now we update the live list of every effect $e$ by adding $x$ to it if $x \cap add_e \neq \emptyset$ and $x\cap del_e = \emptyset$ and removing any state $x'$ from the live list if $x \subsetneq x'$.

Finally, we update the pool by adding new candidates to it. Each candidate is a state constructed as follows: pick an action $a$ with effects $e_1,\ldots, e_K$, for each effect $e_k$, pick a state $b_k$ in its live list. If the live list is non-empty for all the effects and at least one chosen $b_i = x$, then a new candidate is $x_n = \mathit{pre}_a \cup \bigcup_{k=1}^{K} (\,b_k \setminus
\mathit{add}_k\,)$. We enumerate over all actions and all such combination of states in the live list.\footnote{This is an optimisation since if the combination does not contain $x$ then the state is already in the pool or is already rejected by the selection at some iterations. }We add all candidates to the pool. 

If there is a state $x$ in the basis such that $x \subseteq s^I$, then we move directly to the finalisation step. Otherwise, if the pool is empty and there is no such $x$, we stop and run the weak-winning version of Algorithm~III. The implementation of the weak-winning version is the same except in two places:
\begin{itemize}
    \item We replace the equation in the candidate constructing step to $
x_n = \mathit{pre}_a \cup\ (x \setminus
\mathit{add}_e)$. Notice that the constraint of at least one $b_k$ must be $x$ in this case exists directly in the formula.
    \item We change the heuristic function to $\mathrm{key}(s) = |s|$ and tie-break the same way. It is easy to see that this selection mechanism is still $\subsetneq$-monotone.
\end{itemize}

There are more micro optimisation for this core implementation that can be found in our source code. For the strong-winning variation, we implement an additional mechanism for the pool/selection heuristic.
\paragraph{Runtime steering.}  We add two adaptive mechanisms that change the pool order while the strong-winning variation is running.\footnote{All constants are $2^{10}$ fixed for reference, not the tuned number for the benchmark set.} 

\emph{Drift penalties:} the strong-winning variation should move the potential toward the $s^I$ side, we achieve it by introducing this mechanism: During the candidates construction from $x$ process, an action $a$
may put several candidates into the pool. Denote $\mathrm{ch}(a)$ the set of candidates constructed by $a$. If $\mathrm{ch}(a) > 1$, let $\delta_a=\frac{1}{|\mathrm{ch}(a)|}\sum_{c\in\mathrm{ch}(a)}\mathrm{pot}(c)-\mathrm{pot}(x)$ be the mean potential change.  Let $Q$ be the sliding median of $|\mathrm{pot}(x_{i+1})-\mathrm{pot}(x_i)|$ over the last 1024 iterations, bounded below by $1$.  Action $a$ drifts when $|\delta_a|\le Q/2$, that is, when the action moves the potential by less than half a typical step in either direction.  Each drift event adds one penalty unit to $\mathrm{pen}(p)$, for every precondition atom $p$ of $a$, up to four units per atom.  Every 1024
iterations, if any drift event occurred since the last update, we recompute the pool keys as $\mathrm{key}(x)=\mathrm{pot}(x)+\sum_{p\in x}\mathrm{pen}(p)$.  The intuition is that penalties add a small synthetic gradient on the flat parts of the potential function, so the selection moves away from actions that keep producing flat candidates.

\emph{Depth window:} the strong-winning variation should prefer states that gets contructed in fewer steps from $B_0$: at iterations $F,2F,4F,\dots$ ($F=1024$) we add depth term in the key for exactly $\frac{F}{2}$ iterations and then drops it. In particular it add $\mathrm{depth}(x)$ to $\mathrm{key}(x)$ where $\mathrm{depth}(x)$ is the length of the longest states-chain that derives $x$ from the $G$, $G \in B_0$ has depth $0$ and a state is one deeper than the deepest state used to derive it. 

The potential and the penalty sum are sums of nonnegative per-atom terms, so neither of them can order a subset behind a superset.  The depth term can, so while a window is engaged the selection discards any state $x$ at the top of the pool that strictly contains another state in the pool. 

\subsection{Finalisation}

There are two ways the algorithm can terminate:

If pool runs out of candidate then we get both the bases for the strong-winning and weak-winning region excluding some unreachable states. Then we check if there is a state $x \subseteq s^I$ in the weak-winning basis. Note that there is certainly no such state in the strong-winning otherwise the algorithm halted before getting to the weak-winning variation. If there is such $x$, we can construct a best-effort policy by following the construction as described in Algorithm~III and report that the instance is weak-winning. Otherwise it reports that the instance is losing.

If there is a state $x$ in the strong-winning variation's basis such that $x \subseteq s^I$, the algorithm reports that the instance is strong-winning, together with a strong-winning plan.

\begin{algorithm}[t]
\caption{Core algorithm}
\label{alg:core}
\begin{algorithmic}[1]
\REQUIRE FOND task $(S,s^I,A,G)$ and the potential $\mathrm{pot}$; the pool is ordered by $\mathrm{key}$ (ties: fewer atoms outside $s^I$, then the smaller state).
\ENSURE The verdict; a policy when the instance is strong-winning/weak-wining.
\STATE $B \leftarrow \emptyset$;\quad pool $\leftarrow \{G\}$;\quad live lists empty;
\REPEAT
  \STATE pop the min-$\mathrm{key}$ element $x$ of the pool;
  \IF{$x$ is covered by $B$ or violates a mutex group}
    \STATE skip $x$ and try the next pooled element;
  \ENDIF
  \IF{$x$ is not exists because the pool is empty}
    \STATE \textbf{break}
  \ENDIF
  \STATE add $x$ to $B$; mark the states of $B$ that are superset of $x$ as removed; update the live lists;
  \FOR{each candidate $y=\mathit{pre}_a\cup\bigcup_k(b_k\setminus\mathit{add}_k)$ generated from $x$}
    \IF{$y\subseteq s^I$}
      \STATE \textbf{halt}: report \textbf{strong-winning} and return the witness chain;
    \ELSE
      \STATE push $y$ into the pool with key $\mathrm{pot}(y)$;
    \ENDIF
  \ENDFOR
\UNTIL{the pool is empty}
\STATE rerun the loop from the seed (fresh basis) with $\mathrm{key}(x)=|x|$ and $y=\mathit{pre}_a\cup(x\setminus\mathit{add}_e)$; \hfill (weak variation)
\STATE \textbf{if} the weak basis contains $y\subseteq s^I$ \textbf{then} report \textbf{weak-winning} and construct a best-effort policy \textbf{else} report
\textbf{losing}.
\end{algorithmic}
\end{algorithm}

\section{Experiments}
\label{sec:exp}
We have compared our planner with some of the best existing strong and best-effort FOND planners: PR2~\cite{pr2} and FOND-SAT~\cite{fondsat18} for strong planning, and BeSyftP~\cite{syft23} for best-effort planning. The four planners were run on an AMD Ryzen 7 8845HS@4.5GHz with per-instance time and memory limits of 1800s and 16GiB memory.  PR2 runs in its default strong mode (width-1..5 loop). Its design halves each allocatted budget between
the core search and the repair rounds that finalize the plan and extract the policy. All of the planners are the authors' shipped public versions, unmodified. We use domains and instances available from previous publications: For strong planning, the exact 8 domains are used in FOND-SAT benchmarking~\cite{fondsat18}; for best-effort planning, the  benchmarking set is taken directly from the public GitHub repository of BeSyftP~\cite{syft23}.

\whentime{add fond-asp comparasion}

\subsection{Results for strong-winning} 
The benchmark covers eight domains with total of 222 instances: doors (15),
miner (51), elevators (15), tireworld (15), tireworld-spiky (11),
triangle-tireworld (40), islands (60), and zenotravel (15). Zenotravel's empty outcomes and tireworld's ``spare change fails'' outcome are stripped for every
planner because: zenotravel is not strong-winning at all, and only 3 of the 15 tireworld instances are strong-winning if we keep the original domain. After stripped, all zenotravel and twelve tireworld instances are strong-winning. This also appears to be the approach taken in the FOND-SAT paper~\cite{fondsat18}. Table~\ref{tab:coverage} is therefore restricted to the strong-winning instances of each domain (elevators 9 of 15, tireworld 12 of 15); the remaining nine (elevators
p08/p10--p13/p15, tireworld p01/p09/p15) are provably not
strong-winning, and all are weak-winning.

Overall, our planner decides
all the instances within a 1800\,s cap, the slowest in 12.2\,s, outperforming the strong FOND planners PR2 and FOND-SAT in coverage. 
PR2 is faster on the instances it solves in under a second, where our fixed precomputation dominates the core algorithm's runtime.

\begin{table}[!b]
\caption{Coverage table for strong winning instances: \TOOLNAME (ours)
vs.\ PR2 and FOND-SAT within the 1800s and 16GB cap; cells are solved/total within scope, bold marks the highest number of solved instances per row.}
\label{tab:coverage}
\centering
\small
\begin{tabular}{|lccc|}
\hline
domain & \TOOLNAME (ours) & PR2 & FOND-SAT \\
\hline
doors & \textbf{15/15} & 14/15 & \textbf{15/15} \\
miner & 51/51 & 51/51 & 51/51 \\
elev-strong & \textbf{9/9} & \textbf{9/9} & 7/9 \\
tire-strong & 12/12 & 12/12 & 12/12 \\
tire-spiky & \textbf{11/11} & \textbf{11/11} & 10/11 \\
tri-tire & \textbf{40/40} & 32/40 & 2/40 \\
islands & 60/60 & 60/60 & 60/60 \\
zeno & \textbf{15/15} & \textbf{15/15} & 5/15 \\
\hline
TOTAL (213) & \textbf{213/213} & 204/213 & 162/213 \\
\hline
\end{tabular}
\end{table}

\begin{table}[!ht]
\caption{Wall times on the 204 strong-winning instances both solve, per
domain: mean and slowest wall (s) over the same instance set for each
solver.  PR2's no-answer rows are excluded (doors p15; triangle
p33--p40).  Bold = faster (lower).}
\label{tab:common}
\centering
\small
\setlength{\tabcolsep}{3pt}
\begin{tabular}{lccccc}
\hline
domain & instances & \multicolumn{2}{c}{\TOOLNAME} & \multicolumn{2}{c}{PR2} \\
 & & avg (s) & max (s) & avg (s) & max (s) \\
\hline
doors & 14 & \textbf{0.19} & \textbf{0.20} & 52.3 & 560.3 \\
miner & 51 & 0.41 & 1.35 & \textbf{0.20} & \textbf{0.35} \\
elev-strong & 9 & 0.20 & 0.20 & \textbf{0.12} & \textbf{0.13} \\
tire-strong & 12 & 0.18 & 0.20 & \textbf{0.12} & \textbf{0.14} \\
tire-spiky & 11 & 0.19 & 0.21 & \textbf{0.13} & \textbf{0.14} \\
tri-tire & 32 & \textbf{0.80} & \textbf{2.97} & 129.6 & 802.7 \\
islands & 60 & 0.21 & 0.34 & \textbf{0.14} & \textbf{0.22} \\
zeno & 15 & 2.32 & 12.2 & \textbf{1.05} & \textbf{5.01} \\
\hline
TOTAL & 204 & \textbf{0.50} & \textbf{12.2} & 24.1 & 802.7 \\
\hline
\end{tabular}
\end{table}

Figure~\ref{fig:time} shows the solved strong-winning instances versus time curves. PR2's curve rises first (139 vs.\ 92 solved at 0.2s); the \TOOLNAME overtakes just past 0.4\,s and stops 213 (its last solved instances, zenotravel/p15, at
12.2s) while PR2 stops at 204 with the remaining instances timeout; FOND-SAT (dash-dot) stops at 162, its time spread over 0.13--1311.8s.

Table~\ref{tab:coverage} summarizes the performance on strong-winning instances. Our planner solved all 213 strong-winning instances (the slowest is zenotravel/p15 at
12.2\,s) instances, PR2 204 of the 213 in scope, and FOND-SAT 162.  PR2's
rows without a winning answer are nine timeouts (doors p15; triangle p33--p40). 

\paragraph{Analysis of the runtime on small and medium instances.}
Where PR2 is fastest (islands 0.11--0.22\,s, miner 0.13--0.35\,s,
spiky 0.12--0.14\,s, zenotravel 0.14--5.0\,s) our times are within a small factor (0.17--0.34, 0.23--1.4, 0.18--0.21, 0.22--12.2\,s).

The time gap is because of per-instance precomputation (especially mutex groups computation); the core is millisecond-scale on these rows. 

\begin{figure}[!t]
\centering
\includegraphics[width=\columnwidth]{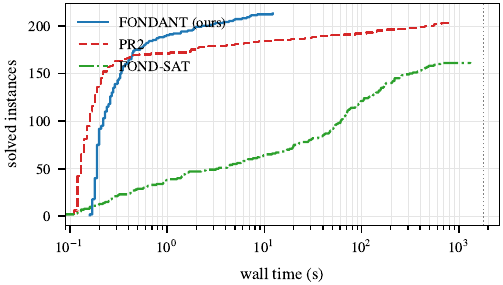}
\caption{Solved instances versus wall time (log scale), \TOOLNAME (ours, solid), PR2 (dashed), and FOND-SAT (dash-dot) over 213 strong-winning instances under
the 1800\,s box, the vertical line
marks the cap.}
\label{fig:time}
\end{figure}

\begin{figure*}[!t]
\centering
\includegraphics[width=\textwidth]{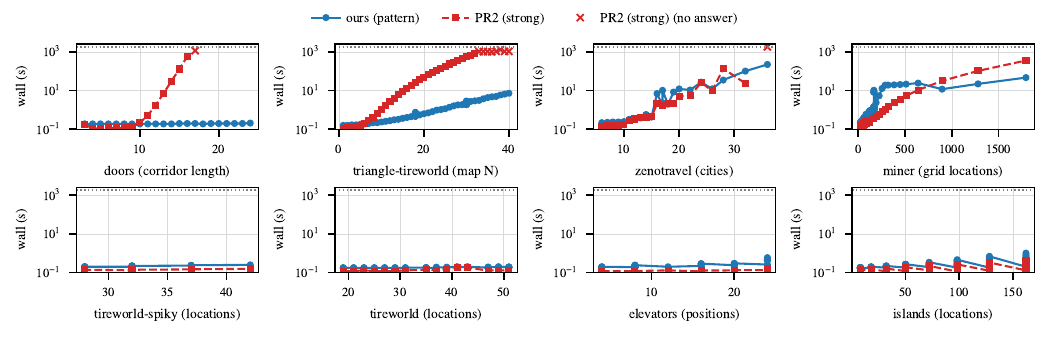}
\caption{Scaling: wall time (s, log scale) against size, ours vs PR2;
dotted line = $1{,}800$\,s cap.  Filled = strong plan,
cross = no strong-plan answer (timeouts/expiries).}
\label{fig:scaling}
\end{figure*}

\paragraph{Our tool is also complete, and provides a certificate of unsolvability.}
To illustrate that our tool is complete, we compared it to PR2 and FOND-SAT on nine instances (elevators p08/p10--p13/p15; tireworld p01/p09/p15) for which there is no strong-winning solution. Our strong-winning phase of the core algorithm terminates within
0.17--3.8s (elevators 0.25--3.8s; tireworld 0.17--0.19s), and provide a certificate (the strong-winning region) that the instance is not strong-winning.  PR2 either expires a time limit mid-search (elevators p11/p12/p15; tireworld p15) or drains its width loop and proves that there is no strong plan of width $\leq 5$ (which is not a valid certifcate to indicate that the instace is not strong-winning). FOND-SAT times out on all nine.

\paragraph{Demonstration of scaling beyond the benchmark set.}
Some of the domains are too small to give a good runtime comparasion between \TOOLNAME and PR2.  We grew each family to a larger size while preserving the structure and
  statistics of the originals in the benchmark set: zenotravel to 36 cities,
  miner to 224 grid rows, doors to a 24-location corridor, tireworld-spiky to
  $g6$, elevators to $P=24$, islands to 162 locations, tireworld to $n=51$.

Both planners were re-run on all of them (Figure~\ref{fig:scaling}).  On doors \TOOLNAME stays flat ($0.19$--$0.21$\,s, terminated within $34$--$46$ iterations)
while PR2 climbs to $1{,}164$s at p15; on triangle-tireworld \TOOLNAME answers all forty in under $7.3$\,s while PR2's budget expires from p33. Tireworld-spiky has the same results as in the benchmark set — both sides finished in a fraction of a second. On miner PR2 leads up to x64 but then grows steeply over the three largest points, where our rows stay in the tens of seconds. The dip at miner x112 is the mutex precomputation switching: below the cap it spends its whole budget before giving up, above it the memory guard aborts at once, so x112 is cheap and x160/x224 resume the rise. Zenotravel when scaled past roughly twenty cities both planners need tens to hundreds of seconds, and at $36$ cities PR2 exhausts widths 1--5 while we still provide a strong plan. For the three remaining families, both stay relative fast as the domain size increases.

\subsection{Best-effort results}
The best-effort benchmark runs on the benchmark families published with BeSyftP \cite{syft23}: the crafted BestEffortTests tireworld suite, TriangleTireWorld p1--p10, Elevators, and
RectangleTireworld p1--p15 (47 instances).
\begin{table}[!h]
\caption{Wall times on the 34 best-effort instances both planners
answer, per family (mean/slowest, s). The twelve instances only \TOOLNAME
answers (elevators p09/p11--p15, rectangle p9--p14) are not included this table.  Bold means faster (lower) per statistic.}
\label{tab:be-times}
\centering
\small
\setlength{\tabcolsep}{3pt}
\begin{tabular}{|lccccc|}
\hline
domain & \#instances & \multicolumn{2}{c}{\TOOLNAME (ours)} & \multicolumn{2}{c|}{BeSyftP} \\
 & & avg (s) & max (s) & avg (s) & max (s) \\
\hline
BestEffortTests & 7 & 0.24 & 0.30 & \textbf{0.06} & \textbf{0.06} \\
TriangleTireWorld & 10 & \textbf{0.18} & \textbf{0.21} & 60.1 & 315.6 \\
Elevators & 9 & \textbf{0.30} & \textbf{0.71} & 178.3 & 1001.7 \\
RectangleTireworld & 8 & \textbf{0.51} & \textbf{0.79} & 43.2 & 236.8 \\
\hline
TOTAL & 34 & \textbf{0.30} & \textbf{0.79} & 75.0 & 1001.7 \\
\hline
\end{tabular}
\end{table}
\begin{table*}[!h]
\caption{Best-effort three-way classification (strong / weak /
losing) of \TOOLNAME side by side with the synthesizer BeSyftP
\cite{syft23} on its published families; cells count instances per
verdict class (``ans'' = answered).  Bold = our totals.}
\label{tab:be}
\centering
\small
\begin{tabular}{|lc|cccc|cccc|}
\hline
  & & \multicolumn{4}{c}{\TOOLNAME (ours)} & \multicolumn{4}{|c|}{BeSyftP} \\

domain & \#instances & strong & coop & losing & ans & strong & coop & losing & ans \\
\hline
BestEffortTests & 7 & 3 & 2 & 2 & 7 & 3 & 2 & 2 & 7 \\
TriangleTireWorld (p1--p10) & 10 & 10 & 0 & 0 & 10 & 10 & 0 & 0 & 10 \\
Elevators & 15 & 9 & 6 & 0 & 15 & 7 & 2 & 0 & 9$^{\ddagger}$ \\
RectangleTireworld & 15 & 14 & 0 & 0 & 14$^{\dagger}$ & 8 & 0 & 0 & 8$^{\ddagger}$ \\
\hline
TOTAL & 47 & \textbf{36} & \textbf{8} & 2 & \textbf{46} & 28 & 4 & 2 & 34 \\
\hline
\end{tabular}

\medskip
\footnotesize $^{\dagger}$ Rectangle p15: neither tool verdicts it: our
PDDL front end's Fast Downward grounding exceeds the memory cap, the
synthesizer BeSyftP times out.  $^{\ddagger}$ synthesis caps within the box.
\end{table*}

Side by side with BeSyftP on its 47 published instances
(Table~\ref{tab:be}), \TOOLNAME answers 46, BeSyftP 34, its 13 misses are timed out instances~(elevators p09/p11--p15, rectangle
p9--p15); on every instance that both answer, the classifications agree. Rectangle p15 is the one neither answers: our Fast-Downward translation exceeds the memory cap and the synthesizer times out. On the 34 instances that both planners answer (Table~\ref{tab:be-times}), our walls are 0.15--0.79s everywhere and BeSyftP's span 0.05--1{,}001.7s: BeSyftP is faster on the  seven crafted tireworld variants (0.06\,s) and
ours is faster on other domains by two to three orders of magnitude (triangle 0.18 vs.\ 60.1\,s; elevators 0.30 vs.\ 178.3\,s; rectangle
0.51 vs.\ 43.2\,s means). The instances BeSyftP leaves unanswered are all decided by \TOOLNAME in 0.19--44.6\,s (six elevators rows; rectangle p9--p14 at 2.6--44.6\,s).

\section{Conclusion and future work}
We described an antichain-based algorithm for best-effort and strong FOND planning, that can also classify instances as strong-winning or weak-winning or neither. 
Compared with PR2, \TOOLNAME does at least as well on some domains and has better scaling on large instances in some domains. Future work will focus on proving that  for uniformly strong-winning policies, there is an instructions-set that induces a uniformly strong-winning policy whose size is as small as the smallest uniformly strong-winning acyclic finite-state strategy. We will also investigate whether there are better selection mechanisms, and we will extend the best-effort and strong-winning test domains/instances.

\bibliography{fondant}

@inproceedings{DBLP:conf/csl/BrenguierRS14,
  author       = {Romain Brenguier and
                  Jean{-}Fran{\c{c}}ois Raskin and
                  Mathieu Sassolas},
  editor       = {Thomas A. Henzinger and
                  Dale Miller},
  title        = {The complexity of admissibility in Omega-regular games},
  booktitle    = {Joint Meeting of the Twenty-Third {EACSL} Annual Conference on Computer
                  Science Logic {(CSL)} and the Twenty-Ninth Annual {ACM/IEEE} Symposium
                  on Logic in Computer Science (LICS), {CSL-LICS} 2014, Vienna, Austria,
                  July 14 - 18, 2014},
  pages        = {23:1--23:10},
  publisher    = {{ACM}},
  year         = {2014},
  url          = {https://doi.org/10.1145/2603088.2603143},
  doi          = {10.1145/2603088.2603143},
  bibsource    = {dblp computer science bibliography, https://dblp.org}
}

@article{cimatti03,
  author  = {Cimatti, Alessandro and Pistore, Marco and Roveri, Marco and Traverso, Paolo},
  title   = {Weak, Strong, and Strong Cyclic Planning via Symbolic Model Checking},
  journal = {Artificial Intelligence},
  volume  = {147},
  number  = {1--2},
  pages   = {35--84},
  year    = {2003},
}

@inproceedings{prp12,
  author    = {Muise, Christian and McIlraith, Sheila A. and Beck, J. Christopher},
  title     = {Improved Non-Deterministic Planning by Exploiting State Relevance},
  booktitle = {Proceedings of the Twenty-Second International Conference on
               Automated Planning and Scheduling (ICAPS)},
  pages     = {172--180},
  year      = {2012},
}

@inproceedings{DBLP:conf/mfcs/Faella09,
  author       = {Marco Faella},
  editor       = {Rastislav Kr{\'{a}}lovic and
                  Damian Niwinski},
  title        = {Admissible Strategies in Infinite Games over Graphs},
  booktitle    = {Mathematical Foundations of Computer Science 2009, 34th International
                  Symposium, {MFCS} 2009, Novy Smokovec, High Tatras, Slovakia, August
                  24-28, 2009. Proceedings},
  series       = {Lecture Notes in Computer Science},
  volume       = {5734},
  pages        = {307--318},
  publisher    = {Springer},
  year         = {2009},
  url          = {https://doi.org/10.1007/978-3-642-03816-7\_27},
  doi          = {10.1007/978-3-642-03816-7\_27},
  bibsource    = {dblp computer science bibliography, https://dblp.org}
}

@inproceedings{DBLP:conf/icra/MuvvalaAL22,
  author       = {Karan Muvvala and
                  Peter Amorese and
                  Morteza Lahijanian},
  title        = {Let's Collaborate: Regret-based Reactive Synthesis for Robotic Manipulation},
  booktitle    = {2022 International Conference on Robotics and Automation, {ICRA} 2022,
                  Philadelphia, PA, USA, May 23-27, 2022},
  pages        = {4340--4346},
  publisher    = {{IEEE}},
  year         = {2022},
  url          = {https://doi.org/10.1109/ICRA46639.2022.9812298},
  doi          = {10.1109/ICRA46639.2022.9812298},
  bibsource    = {dblp computer science bibliography, https://dblp.org}
}

@inproceedings{fondsat18,
  author    = {Geffner, Tomas and Geffner, H{\'e}ctor},
  title     = {Compact Policies for Fully Observable Non-Deterministic Planning as {SAT}},
  booktitle = {Proceedings of the Twenty-Eighth International Conference on
               Automated Planning and Scheduling (ICAPS)},
  pages     = {88--96},
  year      = {2018},
}

@inproceedings{pr2,
  author    = {Muise, Christian and McIlraith, Sheila A. and Beck, J. Christopher},
  title     = {{PRP} Rebooted: Advancing the State of the Art in {FOND} Planning},
  booktitle = {Proceedings of the Thirty-Eighth AAAI Conference on Artificial
               Intelligence (AAAI)},
  pages     = {20212--20221},
  year      = {2024},
  doi       = {10.1609/aaai.v38i18.30001},
}

@inproceedings{ramirez14,
  author    = {Ram{\'i}rez, Miquel and Sardi{\~n}a, Sebastian},
  title     = {Directed Fixed-Point Regression-Based Planning for
               Non-Deterministic Domains},
  booktitle = {Proceedings of the Twenty-Fourth International Conference on
               Automated Planning and Scheduling (ICAPS)},
  pages     = {235--243},
  year      = {2014},
  doi       = {10.1609/icaps.v24i1.13629},
}

@inproceedings{aminof21,
  author    = {Aminof, Benjamin and De Giacomo, Giuseppe and Rubin, Sasha},
  title     = {Best-Effort Synthesis: Doing Your Best Is Not Harder Than Giving Up},
  booktitle = {Proceedings of the Thirtieth International Joint Conference on
               Artificial Intelligence (IJCAI)},
  pages     = {1766--1772},
  year      = {2021},
  doi       = {10.24963/ijcai.2021/243},
}

@book{Fijalkow_2026, editor = { Fijalkow, et al}, place={Cambridge}, title={Games on Graphs: From Logic and Automata to Algorithms}, publisher={Cambridge University Press}, year={2026}}

@inproceedings{berwanger07,
  author    = {Berwanger, Dietmar},
  title     = {Admissibility in Infinite Games},
  booktitle = {Proceedings of the 24th Annual Symposium on Theoretical Aspects
               of Computer Science (STACS)},
  pages     = {188--199},
  year      = {2007},
  publisher = {Springer},
  doi       = {10.1007/978-3-540-70918-3_17},
}

@inproceedings{syft23,
  author    = {De Giacomo, Giuseppe and Parretti, Gianmarco and Zhu, Shufang},
  title     = {{LTLf} Best-Effort Synthesis in Nondeterministic Planning Domains},
  booktitle = {Proceedings of the 26th European Conference on Artificial
               Intelligence (ECAI)},
  pages     = {533--540},
  year      = {2023},
  publisher = {IOS Press},
  doi       = {10.3233/FAIA230313},
}

@article{helmert06,
  author    = {Helmert, Malte},
  title     = {The Fast Downward Planning System},
  journal   = {Journal of Artificial Intelligence Research},
  volume    = {26},
  pages     = {191--246},
  year      = {2006},
  doi       = {10.1613/jair.1705},
}

@inproceedings{haslum00,
  author    = {Haslum, Patrik and Geffner, H{\'e}ctor},
  title     = {Admissible Heuristics for Optimal Planning},
  booktitle = {Proceedings of the Fifth International Conference on Artificial
               Intelligence Planning and Scheduling (AIPS)},
  pages     = {70--79},
  year      = {2000},
}

@article{Bonet_Geffner_2001, title={Planning as heuristic search}, volume={129}, rights={https://www.elsevier.com/tdm/userlicense/1.0/}, ISSN={00043702}, DOI={10.1016/S0004-3702(01)00108-4}, number={1–2}, journal={Artificial Intelligence}, author={Bonet, Blai and Geffner, Héctor}, year={2001}, pages={5–33}, language={en} }

@inproceedings{Rintanen, title={Complexity of Planning with Partial Observability}, year={2004}, 
  author={Rintanen, Jussi}, booktitle = {{ICAPS 2024}}}

@inproceedings{DBLP:conf/cav/FiliotJR09,
  author       = {Emmanuel Filiot and
                  Naiyong Jin and
                  Jean{-}Fran{\c{c}}ois Raskin},
  editor       = {Ahmed Bouajjani and
                  Oded Maler},
  title        = {An Antichain Algorithm for {LTL} Realizability},
  booktitle    = {Computer Aided Verification, 21st International Conference, {CAV}
                  2009, Grenoble, France, June 26 - July 2, 2009. Proceedings},
  series       = {Lecture Notes in Computer Science},
  volume       = {5643},
  pages        = {263--277},
  publisher    = {Springer},
  year         = {2009},
  url          = {https://doi.org/10.1007/978-3-642-02658-4\_22},
  doi          = {10.1007/978-3-642-02658-4\_22},
  bibsource    = {dblp computer science bibliography, https://dblp.org}
}

@inproceedings{DBLP:conf/cav/WulfDHR06,
  author       = {Martin De Wulf and
                  Laurent Doyen and
                  Thomas A. Henzinger and
                  Jean{-}Fran{\c{c}}ois Raskin},
  editor       = {Thomas Ball and
                  Robert B. Jones},
  title        = {Antichains: {A} New Algorithm for Checking Universality of Finite
                  Automata},
  booktitle    = {Computer Aided Verification, 18th International Conference, {CAV}
                  2006, Seattle, WA, USA, August 17-20, 2006, Proceedings},
  series       = {Lecture Notes in Computer Science},
  volume       = {4144},
  pages        = {17--30},
  publisher    = {Springer},
  year         = {2006},
  url          = {https://doi.org/10.1007/11817963\_5},
  doi          = {10.1007/11817963\_5},
  bibsource    = {dblp computer science bibliography, https://dblp.org}
}

\clearpage 
\clearpage

\end{document}